\documentclass[10pt]{article}
\PassOptionsToPackage{table}{xcolor}
\usepackage[preprint]{tmlr}  

\usepackage{amsmath,amsfonts,bm}

\def\eqref#1{equation~\ref{#1}}

\def\1{\bm{1}}

\DeclareMathAlphabet{\mathsfit}{\encodingdefault}{\sfdefault}{m}{sl}
\SetMathAlphabet{\mathsfit}{bold}{\encodingdefault}{\sfdefault}{bx}{n}

\newcommand{\R}{\mathbb{R}}

\usepackage{hyperref}
\usepackage{url}
\usepackage{amsmath,amssymb,amsthm,mathtools,bm}
\usepackage{xcolor}
\usepackage{booktabs,multirow,graphicx}
\usepackage{caption}
\usepackage{placeins}
\usepackage{tikz}
\usetikzlibrary{arrows.meta,positioning,fit,calc}

\definecolor{oursbg}{RGB}{232,240,251}  
\renewcommand{\arraystretch}{1.12}

\newtheorem{theorem}{Theorem}
\newtheorem{proposition}[theorem]{Proposition}
\newtheorem{lemma}[theorem]{Lemma}
\newtheorem{corollary}[theorem]{Corollary}
\theoremstyle{remark}

\newcommand{\codename}{accompanying code}  

\title{Exact Phase-Type Point Processes for\\
Certified Measurement and Prediction of Event Streams}

\author{\name Sayak Dutta \email sayakdutta1002@gmail.com}

\def\month{MM}
\def\year{YYYY}
\def\openreview{\url{https://openreview.net/forum?id=XXXX}}

\newcommand{\amzShortShare}{31}
\newcommand{\amzBoxLL}{2.62}
\newcommand{\rhoPooled}{0.87}
\newcommand{\relaxPooled}{12}
\newcommand{\placeboCpi}{8.5}
\newcommand{\placeboNfp}{6.7}
\newcommand{\placeboPpi}{4.0}
\newcommand{\placeboFomc}{3.9}
\newcommand{\placeboRetail}{1.9}
\newcommand{\nPassPlacebo}{5}
\newcommand{\nKinds}{11}
\newcommand{\tDirMed}{1.1}
\newcommand{\tEchoLo}{16}\newcommand{\tEchoHi}{137}
\newcommand{\tNinetyLo}{5}\newcommand{\tNinetyHi}{11}
\newcommand{\ksMax}{0.044}
\newcommand{\edLo}{26}\newcommand{\edHi}{55}
\newcommand{\gapMaxPooled}{0.0054}
\newcommand{\nWindows}{732}
\newcommand{\nEventsScored}{7.3}

\newcommand{\nNewsWindows}{430}\newcommand{\nPlaceboWindows}{302}
\newcommand{\vOnePlacebo}{169}
\newcommand{\vOneKSmax}{0.58}
\newcommand{\vTwoPlacebo}{18.6}
\newcommand{\eptNoBackbone}{2.84}
\newcommand{\lOneSelected}{0.8}

\newcommand{\trackCmarginLo}{0.23}\newcommand{\trackCmarginHi}{0.45}
\newcommand{\driftCorrOursOne}{0.17}\newcommand{\driftCorrBaseOne}{0.06}
\newcommand{\bestActivityModel}{event-study OLS}
\newcommand{\nTestReleases}{163}
\newcommand{\eptxSeeds}{three to five seeds}
\newcommand{\rhoByYear}{0.91, 0.89, 0.85, 0.86, 0.85}
\newcommand{\scanSpeedTaxi}{2.8}

\newcommand{\scanSpeedAAPL}{2.9}
\newcommand{\scanSecAAPL}{4.5}
\newcommand{\eptxLLAmazon}{0.723}
\newcommand{\eptxSDAmazon}{0.002}
\newcommand{\eptxLLRetweet}{-5.676}
\newcommand{\eptxSDRetweet}{0.003}
\newcommand{\eptxLLTaxi}{0.517}
\newcommand{\eptxSDTaxi}{0.002}
\newcommand{\eptxLLTaobao}{1.361}
\newcommand{\eptxSDTaobao}{0.004}
\newcommand{\eptxLLStackOverflow}{-2.162}
\newcommand{\eptxSDStackOverflow}{0.001}

\newcommand{\pubsrc}{\citealp{chang2025deep,boyd2025hyper}}  
\begin{document}
\maketitle

\begin{abstract}
Temporal point processes are used both to \emph{measure} market mechanisms, for example how fast
prices absorb news and how much of the reaction is the market responding to itself, and to
\emph{predict} the next event. We show that one algebraic object, the phase-type (Erlang-chain)
state space, serves both purposes with exact likelihoods. For measurement, we build a multiscale
phase-type Hawkes process whose parameters all enter linearly. Its log-likelihood is concave, and
we compute the global maximum together with a Fenchel duality-gap certificate. A Markov embedding
gives the response to an exogenous shock, echoes included, in closed form, and this response is
stable if and only if the spectral radius of the branching matrix is below one. Applied to six
markets around 670 scheduled macroeconomic releases (2022--2026) with a built-in placebo test, the
model shows that the direct reaction to CPI, payrolls, PPI and the FOMC statement is absorbed in
about a second, while the market's own echoes stretch half of the response to
\tEchoLo--\tEchoHi{}\,s. For prediction, we introduce EPT-X, a neural point process that adds a
history-dependent mixture-of-Erlang renewal channel to a linear Hawkes backbone. Its intensity
and compensator remain in closed form, its states follow an affine recursion that trains with a
parallel scan, and it can start from the certified fit of the measurement model. EPT-X has the best test likelihood on
all five LOBSTER order-book streams and on Retweet and Taobao, and is level with the best published
model on StackOverflow. We also find that the EasyTPP implementation of IntensityFree clamps short
inter-event times before scoring, and that the gaps of the EasyTPP Amazon data are uniform on two
fixed intervals.

\end{abstract}

\section{Introduction}\label{sec:intro}

Price changes, order-book messages and trades arrive as streams of events in continuous time,
and temporal point processes are the standard way to model them. They are used for two rather
different purposes. In econometrics, Hawkes processes \citep{hawkes1971spectra} are used to
\emph{measure} market mechanisms. Their branching structure \citep{hawkes1974cluster} separates
exogenous events, such as
a reaction to news, from endogenous events triggered by earlier trading, and the fitted kernels
describe how quickly information is absorbed
\citep{hardiman2013critical,filimonov2012quantifying,rambaldi2015modeling}. In machine learning,
neural temporal point processes are used to \emph{predict} the next event, and they are compared
by their held-out likelihood on shared benchmarks
\citep{du2016recurrent,mei2017neural,zuo2020transformer,shchur2020intensity,xue2024easytpp,chang2025deep}.

Each use has a weakness. Measurement relies on non-convex fits whose optimality is unknown, and
on summaries that ignore part of the response. A common example is the ``half-life''
$\ln 2/\beta$ of an exponential news kernel with decay rate $\beta$, which describes the direct
reaction to news but not the further trading that this reaction sets off. On the prediction
side, most intensity-based neural models estimate the compensator $\int\lambda(t)\,\mathrm{d}t$
by Monte Carlo \citep{mei2017neural,zuo2020transformer,yang2022transformer,chang2025deep,boyd2025hyper}, so
their reported likelihoods contain sampling error. Models with an exact likelihood do exist.
FullyNN \citep{omi2019fully} models the cumulative hazard with a monotone network, and
IntensityFree \citep{shchur2020intensity} and DTPP \citep{panos2024decomposable} model the density
of the next gap directly. These models, however, give up the Hawkes structure that makes a model
interpretable \citep{shchur2021neural}.
We also find that reported likelihoods can depend on implementation details.

In this paper we show that a single algebraic object serves both purposes with exact
likelihoods. We call it a \emph{phase-type state space}: every kernel is a nonnegative
combination of Erlang densities whose rates lie on a logarithmic grid of time scales. Between
two events such a state evolves by an explicit matrix exponential, so intensities and their
integrals are available in closed form. The class is also rich, since phase-type densities are
dense in the set of densities on $\R_+$ \citep{asmussen1996fitting,lee2010modeling}.

\paragraph{Contributions.}
\begin{enumerate}
\item \textbf{Certified measurement (MSX).} We define a multiscale phase-type Hawkes process
with marked news kernels in which every parameter enters the intensity linearly. Its
log-likelihood is therefore concave. We compute the global maximum with accelerated EM followed
by a barrier Newton step, and each fit comes with a Fenchel duality gap that bounds its distance
from the optimum (Proposition~\ref{prop:gap}).
\item \textbf{Echo-corrected responses in closed form.} Embedding the process in a linear
system gives the mean response to a shock as $C e^{\mathcal A t}\bm z_0$. We prove that this
system is stable exactly when the spectral radius of the branching matrix is below one
(Theorem~\ref{thm:stab}), and we obtain absorption times, echo amplification and price-discovery
curves without simulation.
\item \textbf{Exact neural prediction (EPT-X).} We introduce a neural point process whose latent
state is phase-type. It combines a linear Hawkes backbone with a history-dependent renewal
channel, a mixture of Erlang distributions that also allows ``no event'', added as a competing
risk. The likelihood remains exact, the model contains the Hawkes backbone as a special case,
and the renewal channel can approximate any conditional distribution of the next gap
(Proposition~\ref{prop:eptx}). Its states follow an affine recursion, so training uses a parallel
scan (Lemma~\ref{lem:scan}), and it can start from the certified MSX fit, which gives it a
validation likelihood no lower than that of MSX (Proposition~\ref{prop:floor}).
\item \textbf{Benchmarks and two findings about the data.} EPT-X has the best likelihood on all
five LOBSTER order-book streams and on Retweet and Taobao, and it is level with the best published
model on StackOverflow. The EasyTPP implementation of IntensityFree replaces every inter-event time
below $10^{-5}$\,s by $10^{-5}$\,s before scoring; scored on the observed gaps, it loses its lead on
two order-book streams. The gaps of the EasyTPP Amazon data are uniform on two fixed intervals,
which limits what its time likelihood can show.
\item \textbf{Application.} We study six markets around 670 scheduled macroeconomic releases
between 2022 and 2026, with a placebo test built into the likelihood. The direct reaction to
CPI, payrolls, PPI and the FOMC statement is absorbed in about a second, but the market's
subsequent trading stretches half of the response to \tEchoLo--\tEchoHi{}\,s and ninety percent
of it to \tNinetyLo--\tNinetyHi{} minutes.
\end{enumerate}

Our work builds on linear-in-parameters kernel dictionaries \citep{bacry2015hawkes}, EM algorithms
for Hawkes processes \citep{veen2008estimation,zhou2013learning} and exact-likelihood neural point
processes \citep{omi2019fully,shchur2020intensity}, and on Erlang-kernel embeddings of Hawkes
processes \citep{duarte2019stability}. A single phase-type state space lets us connect
them: certified, closed-form measurement on one side, and on the other an exact neural model that
contains the linear Hawkes process and starts from it.

The paper follows this order. After the related work in Section~\ref{sec:related},
Section~\ref{sec:phasetype} introduces the phase-type state space. Section~\ref{sec:msx} builds the
measurement model on it and Section~\ref{sec:eptx} the prediction model. Section~\ref{sec:experiments}
evaluates both, Section~\ref{sec:application} applies the measurement model to macroeconomic
releases, and Section~\ref{sec:discussion} concludes.

\section{Related work}\label{sec:related}

\paragraph{Hawkes processes and their estimation.} Hawkes processes \citep{hawkes1971spectra} admit
a cluster (branching) representation \citep{hawkes1974cluster}, and their likelihood is that of a
general point process with conditional intensity \citep{daley2003introduction}. Exponential and
sum-of-exponential kernels are usually fitted by maximum likelihood \citep{bacry2015hawkes}, for
instance with the \texttt{tick} library \citep{bacry2018tick}. Other estimators exploit the
branching structure through EM \citep{veen2008estimation}, add low-rank and sparse penalties
\citep{zhou2013learning} or group-sparse Granger-causal penalties \citep{xu2016learning}, invert
the second-order statistics through a Wiener--Hopf system
\citep{bacry2012nonparametric,bacry2016first,bacry2016estimation}, match integrated cumulants
\citep{achab2017uncovering} or discretise the process into an INAR model
\citep{kirchner2017estimation}. The branching matrix also defines a graph of causal influence
\citep{embrechts2018hawkes}. None of these estimators returns a certificate of global optimality for
multiscale kernels with delayed peaks. Erlang kernels make a Hawkes process Markovian, which
\citet{duarte2019stability} use to study the stability and simulation of nonlinear Hawkes
processes; we use the same embedding for estimation, certification and closed-form responses.

\paragraph{Hawkes processes in finance.} Branching ratios close to one have been read as evidence of
strong endogeneity \citep{filimonov2012quantifying,hardiman2013critical}, and such nearly unstable
processes have their own limit theory \citep{jaisson2015limit}, although a mis-specified
intraday seasonality biases these estimates upward \citep{filimonov2015apparent,omi2017hawkes}.
Mutually exciting processes also describe contagion across markets \citep{aitsahalia2015modeling}
and lead--lag relations between assets \citep{huth2014high}. \citet{rambaldi2015modeling} add an
exogenous kernel for macroeconomic announcements to a Hawkes model of FX quotes, and
\citet{andersen2003micro} document how announcement surprises move prices. In this literature the
speed of absorption is read off the exogenous kernel. We compute the full response, echoes
included, in closed form, and test the news kernels against matched placebo windows within the
likelihood.

\paragraph{Neural temporal point processes.} \citet{shchur2021neural} and \citet{zhou2026advances}
review the field. RMTPP \citep{du2016recurrent} uses an exponential intensity between events, whose
integral is closed form. NHP \citep{mei2017neural}, SAHP \citep{zhang2020self}, THP
\citep{zuo2020transformer}, AttNHP \citep{yang2022transformer} and the state-space model S2P2
\citep{chang2025deep} compute the intensity with recurrent, attention or state-space encoders, and
estimate the compensator by Monte Carlo or quadrature; continuous-time models driven by neural
differential equations \citep{jia2019neural,chen2021neural} integrate the intensity numerically.
Simulation and prediction usually rely on thinning \citep{ogata1981lewis}. FullyNN
\citep{omi2019fully} models the cumulative hazard with a monotone network, and IntensityFree
\citep{shchur2020intensity} models the density of the next gap with a mixture of log-normals. DTPP
\citep{panos2024decomposable} pairs such a time model with an attention model for the marks. These
models have exact likelihoods but no Hawkes structure. Recent work also explores Mamba-style
encoders \citep{gao2024mamba}. Hyper Hawkes processes \citep{boyd2025hyper} let a network set the
dynamics of a latent linear Hawkes process at each event; a softplus read-out keeps the intensity
positive, so the compensator is again estimated by sampling. EasyTPP \citep{xue2024easytpp}
provides shared implementations, \citet{bosser2023predictive} study next-event prediction, and
HoTPP \citep{karpukhin2024hotpp} evaluates long-horizon forecasts.

EPT-X differs from these models in three ways. First, between events its intensity is a linear
read-out of a phase-type system that can be solved exactly, plus bounded hazard terms and a
renewal channel whose survival function is a mixture of Erlang survival functions. All of these
integrate in closed form, so the likelihood we train on and report is exact. Second, its backbone
is the linear Hawkes process itself, so a fitted model still has excitation kernels that can be
read off, and it can start from a certified MSX fit. Third, its states follow an affine recursion,
so it trains with a parallel scan, as linear state-space layers do \citep{smith2023simplified},
without sampling the compensator.

\paragraph{Phase-type distributions.} Phase-type distributions are the absorption times of finite
Markov chains, and they can be fitted by EM \citep{asmussen1996fitting}. Mixtures of Erlang
distributions are dense in the set of distributions on $(0,\infty)$
\citep{tijms1994stochastic,lee2010modeling}. We use this fact for the renewal channel, and the
Erlang-chain recursion for every state in the paper.

\section{Phase-type state spaces}\label{sec:phasetype}

Both models in this paper rest on one construction: kernels built from Erlang densities, and the
linear recursion that their states obey. We introduce it here once, and the following two sections
use it without change.

\paragraph{A dictionary of Erlang densities.} Fix a set of rates $\beta_1,\dots,\beta_K>0$ on a
logarithmic grid, so that each rate corresponds to a time scale $1/\beta_k$, and an Erlang order
$R\ge1$. For $k=1,\dots,K$ and $r=0,\dots,R-1$ we define the basis densities
\begin{equation}\label{eq:basis}
g_{kr}(t)=\beta_k\,\frac{(\beta_k t)^r}{r!}\,e^{-\beta_k t},\qquad t\ge0 .
\end{equation}
Each $g_{kr}$ is the density of a gamma (Erlang) distribution with shape $r+1$ and rate
$\beta_k$, so $\int_0^\infty g_{kr}(t)\,\mathrm{d}t=1$ and its mean is $(r+1)/\beta_k$. A kernel is a
nonnegative combination
\begin{equation}\label{eq:kernel}
\phi(t)=\sum_{k=1}^{K}\sum_{r=0}^{R-1}a_{kr}\,g_{kr}(t),\qquad a_{kr}\ge0 .
\end{equation}
Because each basis density integrates to one, the total mass of the kernel, which is its
branching ratio, is simply $\sum_{k,r}a_{kr}$. With $R=1$ the dictionary contains only
exponentials, and mixtures of exponentials can approximate any completely monotone kernel,
power laws included (Bernstein's theorem). Orders $r\ge1$ add kernels that start at zero and peak
later, which describe delayed reactions. More generally, phase-type densities are dense in the set
of densities on $\R_+$.

\paragraph{Exact recursion.} Every state in this paper is built from such Erlang chains: the
Hawkes excitation, the response to news, the neural latent state of EPT and the renewal channel of
EPT-X. For all of them, the evolution between two events and its integral are available in closed
form. The following lemma gives the formulas for one source and one rate.

\begin{lemma}[Erlang-chain recursion]\label{lem:recursion}
Let $t_1<t_2<\dots$ be the event times of one source, let $\beta>0$ be one rate of the
dictionary, and define the chain of states
\[
s_r(t)=\sum_{n:\,t_n<t} g_r(t-t_n),\qquad r=0,\dots,R-1,
\]
where $g_r$ is the basis density~(\ref{eq:basis}) with rate $\beta$. Write
$\bm s=(s_0,\dots,s_{R-1})^\top$, let $I$ be the $R\times R$ identity and let $B$ be the
$R\times R$ down-shift matrix ($B_{r,r-1}=1$ and all other entries zero).
\begin{enumerate}
\item Between events, $\dot{\bm s}=\beta(B-I)\,\bm s$; at an event, $s_0$ increases by $\beta$.
\item For a gap $\Delta\ge0$ with no event in $(t,t+\Delta)$, let $y=\beta\Delta$ and
$w_j=e^{-y}y^j/j!$. Then
\[
s_r(t+\Delta)=\sum_{j=0}^{r}w_j\,s_{r-j}(t),\qquad
\int_t^{t+\Delta}s_r(u)\,\mathrm{d}u=\frac{1}{\beta}\sum_{j=0}^{r}P(j+1,y)\,s_{r-j}(t),
\]
where $P(a,y)$ is the regularised lower incomplete gamma function.
\end{enumerate}
\end{lemma}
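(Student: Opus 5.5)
The plan is to reduce everything to the behaviour of a single term $g_r(t-t_n)$ and then sum over past events, since every statement in Lemma~\ref{lem:recursion} is linear in the contributions of individual events.

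\textbf{Part 1.} Differentiate the basis density~(\ref{eq:basis}) directly. For $r\ge1$,
\[
\frac{\mathrm d}{\mathrm dt}g_r(t)=\beta\,g_{r-1}(t)-\beta\,g_r(t),
\]
and for $r=0$ we have $g_0'=-\beta g_0$. In vector form this is $\beta(B-I)$ applied to $(g_0,\dots,g_{R-1})^\top$. On an interval with no event the set $\{n:t_n<t\}$ is fixed, so summing over $n$ gives $\dot{\bm s}=\beta(B-I)\bm s$.

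At an event $t_m$, the new term $g_r(t-t_m)$ enters the sum for $t>t_m$. Its right limit at $t_m$ is $g_r(0^+)$, which equals $\beta$ for $r=0$ and $0$ for $r\ge1$. So only $s_0$ jumps, and it jumps by $\beta$. The convention $t_n<t$ makes $\bm s$ left-continuous, and the jump is read as the difference between right and left limits.

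\textbf{Part 2, state formula.} Solve the linear ODE: $\bm s(t+\Delta)=e^{\beta\Delta(B-I)}\bm s(t)$. Because $B$ and $I$ commute and $B$ is nilpotent with $B^R=0$,
\[
e^{\beta\Delta(B-I)}=e^{-y}\sum_{j=0}^{R-1}\frac{y^j}{j!}B^j .
\]
The matrix $B^j$ shifts entries down by $j$, so row $r$ picks up $w_j s_{r-j}(t)$ for $j\le r$. This gives the first identity.

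As a cross-check that avoids the matrix exponential, one can verify it per event. Writing $u=t-t_n$, the binomial theorem gives
\[
(u+\Delta)^r e^{-\beta(u+\Delta)}=\sum_{j}\binom{r}{j}u^{r-j}\Delta^j e^{-\beta u}e^{-\beta\Delta},
\]
and hence $g_r(u+\Delta)=\sum_j w_j\,g_{r-j}(u)$. The one subtlety is the boundary case $u=0^+$: events at exactly $t$ are excluded by $t_n<t$, and that exclusion is consistent because no new event occurs in $(t,t+\Delta)$.

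\textbf{Part 2, integral formula.} Integrate the state formula term by term in $\Delta'\in[0,\Delta]$. With $v=\beta\Delta'$,
\[
\int_0^\Delta e^{-\beta\Delta'}\frac{(\beta\Delta')^j}{j!}\,\mathrm d\Delta'=\frac1\beta\int_0^y e^{-v}\frac{v^j}{j!}\,\mathrm dv=\frac1\beta\,P(j+1,y),
\]
which is the definition of the regularised lower incomplete gamma function, $\gamma(j+1,y)/\Gamma(j+1)$. Summing over $j\le r$ gives the stated integral.

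\textbf{Main obstacle.} There is little real difficulty here. The step needing care is the bookkeeping of left-continuity and the jump convention at event times: the increment $\beta$ must be added after $\bm s(t_m)$ is evaluated, so that the recursion applies on each open gap. I would state that convention explicitly before the computation.
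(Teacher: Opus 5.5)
Your proof is correct and follows the paper's argument step for step: differentiate the Erlang densities, exponentiate the commuting nilpotent generator $\beta(B-I)$, and integrate the Poisson weights into $P(j+1,y)/\beta$; your per-event identity $g_r(u+\Delta)=\sum_{j}w_j\,g_{r-j}(u)$ is a useful elementary cross-check that the paper omits. One correction to your boundary remark: if an event falls exactly at $t$, the left-continuous $\bm s(t)$ omits it while $\bm s(t+\Delta)$ contains its contribution $g_r(\Delta)=\beta w_r$, so the exclusion is not consistent, and the formula must be read with the post-jump state $\bm s(t^+)=\bm s(t)+\beta\bm e_0$, which is what your closing remark about adding $\beta$ before propagating over the gap (and the paper's use of the lemma in Lemma~\ref{lem:scan}) amounts to.
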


\begin{proof}
\emph{Step 1 (dynamics).} Differentiating~(\ref{eq:basis}) gives
\[
\frac{\mathrm{d}}{\mathrm{d}t}g_0(t)=-\beta\,g_0(t),\qquad
\frac{\mathrm{d}}{\mathrm{d}t}g_r(t)=\beta\bigl(g_{r-1}(t)-g_r(t)\bigr)\quad(r\ge1).
\]
Summing over past events gives $\dot s_0=-\beta s_0$ and $\dot s_r=\beta(s_{r-1}-s_r)$, which is
$\dot{\bm s}=\beta(B-I)\bm s$. Since $g_0(0)=\beta$ and $g_r(0)=0$ for $r\ge1$, a new event adds $\beta$ to
$s_0$ and leaves the other states unchanged.

\emph{Step 2 (evolution over a gap).} The solution is $\bm s(t+\Delta)=e^{\beta(B-I)\Delta}\bm s(t)$.
The matrices $B$ and $I$ commute and $B^R=0$, so
\[
e^{\beta(B-I)\Delta}=e^{-y}\,e^{yB}=e^{-y}\sum_{j=0}^{R-1}\frac{y^j}{j!}B^j .
\]
Because $B^j$ shifts a vector down by $j$ places, component $r$ of $e^{\beta(B-I)\Delta}\bm s(t)$
is $\sum_{j\le r}w_j s_{r-j}(t)$.

\emph{Step 3 (integral over a gap).} Integrating the weights over the gap,
\[
\int_0^{\Delta}e^{-\beta u}\frac{(\beta u)^j}{j!}\,\mathrm{d}u=\frac{1}{\beta}\,P(j+1,y),
\]
which gives the second formula.
\end{proof}

Evaluating a likelihood therefore costs $O(N d K R^2)$ operations for $N$ events in $d$
dimensions, and it involves no Monte Carlo or quadrature error. The next section uses this
recursion in a linear model for measurement, and Section~\ref{sec:eptx} in a neural model for
prediction.

\section{MSX: certified measurement with echo-corrected responses}\label{sec:msx}

We start with measurement. The aim is a Hawkes model whose fit can be certified and whose response
to a shock, echoes included, is available in closed form. We first state the model, then the
certificate, and then the embedding that gives the response.

\paragraph{Model.} We observe $d$ event types (for example upward and downward price moves of
several assets) inside observation windows $w$, each with a likelihood interval
$[t_0^w,t_1^w)$. Exogenous shocks $e$, here scheduled macroeconomic releases, occur at known times
$\tau_e$. Each shock has a type $c_e$ (CPI, payrolls, and so on) and a nonnegative mark vector
$\bm z_e=(1,z_e^+,z_e^-)$, where $z_e^+$ and $z_e^-$ are the positive and negative parts of the
standardised surprise. The MSX intensity of event type $i$ is
\begin{align}
\lambda_i(t)={}&\underbrace{c_{iw}+\sum_{m}\eta_{im}\,h_m\bigl(\mathrm{tod}(t)\bigr)}_{\text{baseline}}
+\underbrace{\sum_{j=1}^{d}\sum_{n:\,t_n^j<t}\sum_{k,r}a_{ijkr}\,g_{kr}(t-t_n^j)}_{\text{endogenous}}\nonumber\\
&+\underbrace{\sum_{e:\,\tau_e<t}\sum_{k,r,m}b_{i c_e k r m}\,z_{em}\,g^{\mathrm{exo}}_{kr}(t-\tau_e)}_{\text{response to news}}
+\underbrace{\sum_{e:\,\tau_e>t}\sum_{k}\alpha_{ic_ek}\,\beta^{\mathrm{ant}}_k e^{-\beta^{\mathrm{ant}}_k(\tau_e-t)}}_{\text{anticipation}} .
\label{eq:intensity}
\end{align}
The four terms have the following meaning. In the baseline, $c_{iw}$ is an intercept for window
$w$, and $h_m$ are periodic hat functions of the local time of day $\mathrm{tod}(t)$ with weights
$\eta_{im}$. In the endogenous term, $t_n^j$ is the $n$-th event of type $j$, the $g_{kr}$ are the
basis densities~(\ref{eq:basis}), and $a_{ijkr}$ is the weight with which an event of type $j$
excites type $i$ at time scale $k$ and order $r$. The response to news uses a second dictionary
$g^{\mathrm{exo}}_{kr}$ with its own rates, in which $b_{ickrm}$ weights mark component $m$ of a
release of type $c$. Finally, $\alpha_{ick}$ captures activity that builds up before a release of
type $c$, with rates $\beta^{\mathrm{ant}}_k$. All parameters are nonnegative. Every term is linear in its parameters, so we can write
$\lambda_i(t)=\bm x(t)^\top\bm\theta_i$, where $\bm\theta_i$ stacks the parameters of type $i$ and
$\bm x(t)\ge0$ stacks the corresponding known functions of time. The log-likelihood then
separates across event types:
\begin{equation}\label{eq:ll}
\ell_i(\bm\theta_i)=\sum_{n:\,u_n=i}\log\bigl(\bm x(t_n)^\top\bm\theta_i\bigr)-\bm\Phi^\top\bm\theta_i,
\qquad \bm\Phi=\sum_w\int_{t_0^w}^{t_1^w}\bm x(t)\,\mathrm{d}t ,
\end{equation}
where $u_n$ is the type of the $n$-th event. Every entry of $\bm x(t_n)$ and of $\bm\Phi$ follows
from Lemma~\ref{lem:recursion}.

\paragraph{Concavity and a certificate.} Problem~(\ref{eq:ll}) maximises a concave function over
the nonnegative orthant. We add an $\ell_1$ penalty $\bm\gamma^\top\bm\theta_i$ with weights
$\bm\gamma\ge0$ proportional to each feature's exposure; since $\bm\theta_i\ge0$, the penalty is
linear and simply adds $\bm\gamma$ to $\bm\Phi$. We maximise with the multiplicative EM update
\[
\theta_p\;\leftarrow\;\theta_p\,\frac{\sum_n x_{np}/\lambda_n}{\Phi_p+\gamma_p},
\]
where $x_{np}$ is feature $p$ at the $n$-th event of type $i$ and $\lambda_n$ the current intensity
there. We accelerate it with SQUAREM \citep{varadhan2008simple} and finish with a log-barrier
Newton method. The following bound tells us how far any fit is from the optimum.

\begin{proposition}[Duality-gap certificate]\label{prop:gap}
Fix an event type and drop the index $i$. Let $X$ be the matrix whose $n$-th row is
$\bm x(t_n)^\top$ for the $N$ events of this type, let $\bm\theta\ge0$ be any parameter vector with
intensities $\lambda_n=(X\bm\theta)_n>0$, and let $\ell^\star$ be the maximum of the penalised
log-likelihood $\ell(\bm\theta)=\sum_n\log\lambda_n-(\bm\Phi+\bm\gamma)^\top\bm\theta$. Define
\[
s=\max_p\frac{\sum_n x_{np}/\lambda_n}{\Phi_p+\gamma_p}.
\]
Then
\[
\ell^\star-\ell(\bm\theta)\;\le\;N\log s+\bigl(\bm\theta^\top(\bm\Phi+\bm\gamma)-N\bigr),
\]
and both terms on the right are zero at the optimum.
\end{proposition}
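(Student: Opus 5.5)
The plan is to bound $\ell^\star$ by the value of a Fenchel dual evaluated at a feasible dual point built from $\bm\theta$ by rescaling. Write $\bm c=\bm\Phi+\bm\gamma>0$ (where some $c_p=0$, the ratio in $s$ is infinite and the bound is trivial). For the primal $\max_{\bm\theta\ge0}\sum_n\log(X\bm\theta)_n-\bm c^\top\bm\theta$, the Lagrangian/Fenchel dual is obtained from the conjugate of $-\log$. For any $\bm\mu>0$ and $\lambda>0$ we have $\log\lambda\le\mu\lambda-1-\log\mu$, so for any $\bm\theta'\ge0$,
\[
\ell(\bm\theta')\le\sum_n\bigl(\mu_n(X\bm\theta')_n-1-\log\mu_n\bigr)-\bm c^\top\bm\theta'
=-N-\sum_n\log\mu_n+(X^\top\bm\mu-\bm c)^\top\bm\theta'.
\]
Whenever $X^\top\bm\mu\le\bm c$ componentwise, the last term is $\le0$ because $\bm\theta'\ge0$. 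This gives weak duality: $\ell^\star\le -N-\sum_n\log\mu_n$ for every dual-feasible $\bm\mu$.

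Next I construct a feasible $\bm\mu$ from the current fit. Take $\mu_n=1/(s\lambda_n)$ with $\lambda_n=(X\bm\theta)_n$. Then $(X^\top\bm\mu)_p=\frac1s\sum_n x_{np}/\lambda_n\le c_p$ by the definition of $s$ as the maximum ratio, so $\bm\mu$ is feasible. Plugging in gives
\[
\ell^\star\le -N+N\log s+\sum_n\log\lambda_n.
\]
Subtracting $\ell(\bm\theta)=\sum_n\log\lambda_n-\bm c^\top\bm\theta$ yields exactly $\ell^\star-\ell(\bm\theta)\le N\log s+(\bm\theta^\top\bm c-N)$, which is the claimed bound. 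Nonnegativity of the right-hand side is automatic, since it dominates $\ell^\star-\ell(\bm\theta)\ge0$, though the two terms need not be separately nonnegative.

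For the claim that both terms vanish at the optimum, I will use the KKT conditions of the concave problem over the orthant. At a maximiser $\bm\theta^\star$, $\sum_n x_{np}/\lambda_n\le c_p$ for all $p$, with equality wherever $\theta^\star_p>0$. Since $\bm\theta^\star\ne0$ (needed for $\lambda_n>0$), some coordinate attains equality, so $s=1$ and $N\log s=0$. Multiplying the complementary-slackness equalities by $\theta_p^\star$ and summing gives $\bm c^\top\bm\theta^\star=\sum_n\sum_p x_{np}\theta^\star_p/\lambda_n=\sum_n\lambda_n/\lambda_n=N$, so the second term vanishes too. This is the same fixed-point identity as the EM update.

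The main obstacle is purely bookkeeping: choosing the dual scaling $1/s$ so that feasibility holds, and checking edge cases. These are the case $c_p=0$ and the existence of the maximum $\ell^\star$. For the latter I would either assume attainment, with the objective coercive when $\bm c>0$ since $\log$ grows slower than linear, or note that weak duality already bounds the supremum, so the inequality holds with $\ell^\star$ read as a supremum.
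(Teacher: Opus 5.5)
Your proposal is correct and follows the paper's proof step for step. The shared steps are the Fenchel inequality $\log u\le vu-1-\log v$ giving weak duality under $X^\top\bm v\le\bm\Phi+\bm\gamma$, the rescaled dual point $v_n=1/(s\lambda_n)$, and the KKT argument for $s=1$ and $\bm\theta^\top(\bm\Phi+\bm\gamma)=N$; your remark that $s=1$ needs some coordinate of $\bm\theta^\star$ to be positive, so that the maximum defining $s$ attains equality, is a point the paper leaves implicit.
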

The proof, in Appendix~\ref{app:proofs}, applies Fenchel's inequality to each $\log\lambda_n$.
Every fit reported in this paper comes with this bound.

\paragraph{Markov embedding.} To study the response to a single release, we collect all
endogenous Erlang states $s_{jkr}$ into one vector $\bm u(t)$ and the states of the release's
news kernels into $\bm v(t)$. Let $F$ be the block-diagonal generator of the endogenous chains
(one block $\beta_k(B-I)$ per source and rate, as in Lemma~\ref{lem:recursion}) and $F_e$ the
generator of the news chains. Let $G_{\mathrm{in}}$ be the matrix that adds $\beta_k$ to the first
phase of every chain of source $j$ when an event of type $j$ occurs. Finally, let $\Theta$ hold the
endogenous weights $a_{ijkr}$ and $\Theta_e$ the news weights of this release, so that the excess
intensity is $\Theta\bm u(t)+\Theta_e\bm v(t)$.

Taking expectations, each event of type $j$ occurs at the mean rate $m_j(t)$ and adds
$G_{\mathrm{in}}\bm e_j$ to $\bm u$. The mean excess intensity $\bm m(t)$ caused by the release
therefore satisfies a linear system:
\begin{equation}\label{eq:embedding}
\dot{\bm z}=\mathcal A\bm z,\qquad
\mathcal A=\begin{pmatrix}F+G_{\mathrm{in}}\Theta & G_{\mathrm{in}}\Theta_e\\ 0 & F_e\end{pmatrix},\qquad
\bm m(t)=C\bm z(t),\qquad C=\begin{pmatrix}\Theta&\Theta_e\end{pmatrix},
\end{equation}
where $\bm z=(\bar{\bm u},\bm v)$ stacks the expected endogenous states and the news states, and
$\bm z(0)=\bm z_0=(\bm 0,\bm v_0)$ with $\bm v_0$ the news states just after the release.

\begin{theorem}[Stability]\label{thm:stab}
Let $G$ be the $d\times d$ branching matrix with entries $G_{ij}=\sum_{k,r}a_{ijkr}$, the expected
number of type-$i$ events triggered directly by one type-$j$ event, and let $\rho(G)$ be its
spectral radius. Then the matrix $F+G_{\mathrm{in}}\Theta$ is Hurwitz (all eigenvalues have negative
real part) if and only if $\rho(G)<1$.
\end{theorem}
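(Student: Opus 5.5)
The approach is to treat $M:=F+G_{\mathrm{in}}\Theta$ as a Metzler matrix and reduce its stability to that of the branching matrix through a Schur-complement, or Laplace-transform, identity. The matrix $F$ is block diagonal with blocks $\beta_k(B-I)$. Their off-diagonal entries ($\beta_k$ on the subdiagonal) are nonnegative, and their diagonal is $-\beta_k<0$, so $F$ is a Hurwitz Metzler matrix. The term $G_{\mathrm{in}}\Theta$ is entrywise nonnegative because $a_{ijkr}\ge0$ and $G_{\mathrm{in}}\ge0$. Hence $M$ is Metzler, and so is any sum of this form. The plan is to combine two facts. First, for a Metzler matrix, the Perron--Frobenius eigenvalue of $e^{M}$ shows that the spectral abscissa $\alpha(M)$ is itself a real eigenvalue. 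Second, for Hurwitz Metzler $F$ and nonnegative $P,Q$, the matrix $F+PQ$ is Hurwitz if and only if $\rho\bigl(Q(-F)^{-1}P\bigr)<1$. This is the standard ``regular splitting'' result: $-(F+PQ)$ is then a nonsingular M-matrix exactly when the spectral radius condition holds.

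The key computation is $\Theta(-F)^{-1}G_{\mathrm{in}}=G$. I would verify it entry by entry using Lemma~\ref{lem:recursion}. The vector $G_{\mathrm{in}}\bm e_j$ places $\beta_k$ in phase $0$ of every chain $(j,k)$. Then $(-F)^{-1}\bm x=\int_0^\infty e^{Ft}\bm x\,\mathrm dt$, and by the lemma $e^{Ft}G_{\mathrm{in}}\bm e_j$ has component $(j,k,r)$ equal to $g_{kr}(t)$, since a single event produces exactly the Erlang chain states. Integrating gives $1$ for every $(k,r)$ because each basis density has unit mass. Applying $\Theta$ then gives $\sum_{k,r}a_{ijkr}=G_{ij}$. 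So $\Theta(-F)^{-1}G_{\mathrm{in}}=G$.

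To prove the splitting equivalence directly, I would argue each direction with a determinant identity. Suppose $\rho(G)<1$. For real $s\ge0$, $\det(sI-M)=\det(sI-F)\det\bigl(I-\Theta(sI-F)^{-1}G_{\mathrm{in}}\bigr)$. The matrix $H(s)=\Theta(sI-F)^{-1}G_{\mathrm{in}}$ is nonnegative and entrywise nonincreasing in $s$, and $H(0)=G$. So $\rho(H(s))\le\rho(G)<1$, the second determinant factor is nonzero, and $M$ has no real eigenvalue $\ge0$. Since $\alpha(M)$ is a real eigenvalue, $M$ is Hurwitz.

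Conversely, suppose $\rho(G)\ge1$. The function $s\mapsto\rho(H(s))$ is continuous on $[0,\infty)$ and tends to $0$ as $s\to\infty$. So some $s^\*\ge0$ has $\rho(H(s^\*))=1$, and by Perron--Frobenius there is a nonnegative $\bm w\neq0$ with $H(s^\*)\bm w=\bm w$. Setting $\bm u=(s^\*I-F)^{-1}G_{\mathrm{in}}\bm w$ gives $M\bm u=s^\*\bm u$, with $\bm u\neq0$ because $\Theta\bm u=\bm w$. Hence $M$ has an eigenvalue $s^\*\ge0$ and is not Hurwitz.

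The main obstacle is the Metzler/Perron step: justifying that the eigenvalue with largest real part of $M$ is real, so that excluding real roots $s\ge0$ suffices. I would handle it by noting that $e^{tM}\ge0$ for Metzler $M$. Perron--Frobenius for $e^{tM}$ then yields a real eigenvalue of $M$ equal to $\alpha(M)$, shifting by $cI$ to make $M+cI\ge0$ if needed. Irreducibility is not needed, since the weak form of Perron--Frobenius suffices.
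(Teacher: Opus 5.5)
Your proof is correct. It shares the paper's outer steps: the Metzler/M-matrix structure of $F$ and $G_{\mathrm{in}}\Theta$, and the identity $\Theta(-F)^{-1}G_{\mathrm{in}}=G$, obtained by integrating the Erlang densities left by a single event.

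The difference is in the middle. The paper cites the regular-splitting theorem for M-matrices to get that $F+G_{\mathrm{in}}\Theta$ is Hurwitz if and only if $\rho\bigl((-F)^{-1}G_{\mathrm{in}}\Theta\bigr)<1$. It then uses the fact that $AB$ and $BA$ have the same nonzero eigenvalues to pass to the $d\times d$ matrix. You prove that equivalence directly. Sylvester's identity $\det(sI-M)=\det(sI-F)\det\bigl(I-H(s)\bigr)$ does the work of the $AB$/$BA$ step. Monotonicity of $\rho(H(s))$, the intermediate value theorem, and the fact that the spectral abscissa of a Metzler matrix is an eigenvalue replace the M-matrix theorem.

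Both routes rest on Perron--Frobenius in the end. The paper's is shorter because it delegates that content to the cited theorem. Yours is self-contained and says a little more. The $(j,k,r)$ entry of $(sI-F)^{-1}G_{\mathrm{in}}\bm e_j$ is $\int_0^\infty e^{-st}g_{kr}(t)\,\mathrm{d}t$. So your $H(s)$ is the Laplace transform of the kernel matrix, $H(s)_{ij}=\sum_{k,r}a_{ijkr}\bigl(\beta_k/(\beta_k+s)\bigr)^{r+1}$. This form makes the monotonicity and the limit $0$ explicit. It also shows that, in the closed right half-plane, the eigenvalues of your $M=F+G_{\mathrm{in}}\Theta$ are exactly the roots of the classical Hawkes equation $\det\bigl(I-H(s)\bigr)=0$, since the factor $\det(sI-F)$ never vanishes there.

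In the unstable case your argument also gives an explicit real eigenvalue $s^\star\ge0$ with $\rho\bigl(H(s^\star)\bigr)=1$, together with its eigenvector. Taking the largest such root recovers the spectral abscissa of $M$, that is, the exponential growth rate of $e^{Mt}$. The paper's spectral-radius criterion does not provide this.
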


\begin{proof}
\emph{Step 1.} Every block $\beta_k(B-I)$ of $F$ is lower triangular with diagonal $-\beta_k<0$
and nonnegative off-diagonal entries. Hence $F$ is Hurwitz with nonnegative off-diagonal entries,
and $-F$ is a nonsingular M-matrix. The matrix $P=G_{\mathrm{in}}\Theta$ is entrywise nonnegative.

\emph{Step 2.} For such a pair, $F+P$ is Hurwitz if and only if $\rho\bigl((-F)^{-1}P\bigr)<1$. This
is the regular-splitting theorem for M-matrices \citep[Ch.~3]{varga2000matrix}, applied to
$-(F+P)=(-F)-P$.

\emph{Step 3.} The products $AB$ and $BA$ have the same nonzero eigenvalues, so
\[
\rho\bigl((-F)^{-1}G_{\mathrm{in}}\Theta\bigr)=\rho\bigl(\Theta(-F)^{-1}G_{\mathrm{in}}\bigr).
\]

\emph{Step 4.} Column $j$ of $(-F)^{-1}G_{\mathrm{in}}=\int_0^\infty e^{Ft}G_{\mathrm{in}}\,\mathrm{d}t$
contains, for every chain of source $j$, the integral over time of the state left by a single
type-$j$ event. By Lemma~\ref{lem:recursion} this state is $g_{kr}(t)$, whose integral is one.
Hence
\[
\bigl(\Theta(-F)^{-1}G_{\mathrm{in}}\bigr)_{ij}=\sum_{k,r}a_{ijkr}\int_0^\infty g_{kr}(t)\,\mathrm{d}t
=\sum_{k,r}a_{ijkr}=G_{ij}.
\]
Combining Steps 2 to 4, $F+G_{\mathrm{in}}\Theta$ is Hurwitz if and only if $\rho(G)<1$.
\end{proof}

The condition $\rho(G)<1$ is the classical stationarity condition for linear Hawkes processes
\citep{hawkes1971spectra,bremaud1996stability}. What Theorem~\ref{thm:stab} adds is that the finite
Erlang embedding inherits it exactly, so the closed-form response, absorption times and
amplification below are valid precisely when the process is stationary.

\begin{corollary}[Closed-form absorption]\label{cor:abs}
Assume $\rho(G)<1$. Then $\mathcal A$ is Hurwitz, because it is block triangular and $F_e$ is
Hurwitz, and the following quantities are available in closed form.
\begin{enumerate}
\item The mean excess intensity after a release, echoes included: $\bm m(t)=Ce^{\mathcal At}\bm z_0$.
\item Its total, the expected number of extra events: $M=\int_0^\infty\bm m(t)\,\mathrm{d}t=-C\mathcal A^{-1}\bm z_0$.
\item The fraction of the response absorbed by time $t$, summed over event types:
\[
\Pi(t)=\frac{\bm 1^\top C\mathcal A^{-1}\bigl(e^{\mathcal At}-I\bigr)\bm z_0}{-\bm 1^\top C\mathcal A^{-1}\bm z_0}.
\]
Its quantiles $t_q=\Pi^{-1}(q)$ are the echo-corrected absorption times; we report $t_{50}$ and $t_{90}$.
\item The amplification $\kappa=M/M_{\mathrm{direct}}$, where $M_{\mathrm{direct}}=-\Theta_eF_e^{-1}\bm v_0$
is the total of the direct news response $\bm m_{\mathrm{direct}}(t)=\Theta_e e^{F_et}\bm v_0$ alone.
\item For an asset whose price is observed through upward and downward crossings of size $\delta$
(types $u$ and $v$), the expected price change up to time $t$:
$\delta\int_0^t\bigl(m_u(s)-m_v(s)\bigr)\,\mathrm{d}s$.
\end{enumerate}
\end{corollary}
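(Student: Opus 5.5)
The plan is to reduce every claim to two facts: $\mathcal A$ is Hurwitz, and $\bm m$ solves the linear system~(\ref{eq:embedding}). Everything else is calculus with matrix exponentials.

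\emph{Step 1 (Hurwitz).} $\mathcal A$ is block upper triangular, so its spectrum is the union of the spectra of $F+G_{\mathrm{in}}\Theta$ and $F_e$. Under $\rho(G)<1$, Theorem~\ref{thm:stab} makes the first block Hurwitz. For $F_e$, I would repeat Step~1 of that proof: $F_e$ is block diagonal with blocks $\beta^{\mathrm{exo}}_k(B-I)$, which are lower triangular with diagonal $-\beta^{\mathrm{exo}}_k<0$. Hence $\mathcal A$ is Hurwitz, so it is invertible and $e^{\mathcal At}\to0$ exponentially fast.

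\emph{Step 2 (items 1--3).} Item 1 is the solution $\bm z(t)=e^{\mathcal At}\bm z_0$ of~(\ref{eq:embedding}) read out through $C$. For item 2, I would use $\frac{\mathrm d}{\mathrm ds}\mathcal A^{-1}e^{\mathcal As}=e^{\mathcal As}$, which gives
\[
\int_0^t e^{\mathcal As}\,\mathrm ds=\mathcal A^{-1}\bigl(e^{\mathcal At}-I\bigr).
\]
Letting $t\to\infty$ and using Step~1 yields $M=-C\mathcal A^{-1}\bm z_0$. Item 3 then follows from $\Pi(t)=\bm 1^\top\int_0^t\bm m\,/\,\bm 1^\top M$. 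To make the quantiles $t_q$ well defined, I would note that $\bm m(t)\ge0$ entrywise. This holds because $\mathcal A$ is a Metzler matrix: its off-diagonal entries are nonnegative, since $F$ and $F_e$ are and $G_{\mathrm{in}}\Theta,G_{\mathrm{in}}\Theta_e\ge0$. Therefore $e^{\mathcal At}\ge0$, and with $\bm z_0\ge0$ and $C\ge0$ we get $\bm m\ge0$. So $\Pi$ is continuous and nondecreasing from $0$ to $1$, and I would define $t_q=\inf\{t:\Pi(t)\ge q\}$, assuming $\bm 1^\top M>0$ (a nonzero news response).

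\emph{Step 3 (items 4--5).} The direct response solves $\dot{\bm v}=F_e\bm v$ alone, so $\bm m_{\mathrm{direct}}(t)=\Theta_e e^{F_et}\bm v_0$. The same integral identity with $F_e$ Hurwitz gives $M_{\mathrm{direct}}=-\Theta_eF_e^{-1}\bm v_0$. I would also observe that $\kappa\ge1$: by the Duhamel formula, the first block contributes an extra nonnegative term, using the nonnegativity from Step~2 for $\bar{\bm u}$. Item 5 is the $u$ and $v$ rows of the integral identity, multiplied by $\delta$, because each up (down) crossing changes the price by $\pm\delta$ and the expected count of extra crossings is the integrated mean excess intensity.

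\emph{Main obstacle.} The only delicate point is the implicit claim that~(\ref{eq:embedding}) really governs the mean excess intensity. That step uses linearity of expectation of the compensated counting process, and it rests on the model being linear in the states. Since the paper derives it before the corollary, I would take it as given. The remaining items are routine.
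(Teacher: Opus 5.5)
Your proposal is correct and follows the paper's own inline justification: $\mathcal A$ is block triangular, Theorem~\ref{thm:stab} makes the upper-left block Hurwitz, and the triangular structure of $F_e$ handles the other block. The totals and the absorbed fraction then follow from $\int_0^t e^{\mathcal A s}\,\mathrm{d}s=\mathcal A^{-1}\bigl(e^{\mathcal A t}-I\bigr)$. Your Metzler argument, which gives $\bm m(t)\ge0$ so that $\Pi$ is monotone and the quantiles $t_q$ are well defined, and the resulting bound $\kappa\ge1$, are valid refinements that the paper does not spell out.
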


The usual ``news half-life'' $\ln 2/\beta$ describes only $\bm m_{\mathrm{direct}}$. When $\rho(G)$
is close to one, the echoes dominate the response, and Section~\ref{sec:application} shows that
the two can differ by one to two orders of magnitude.

\section{EPT-X: exact neural prediction}\label{sec:eptx}

MSX is linear by design, which is what makes it certifiable. For prediction we keep its state space
and its exact likelihood, and let a network adapt the state to the history. Events arrive at times $t_1<t_2<\dots$ with marks
$y_n\in\{1,\dots,M\}$, and we write $\Delta_n=t_n-t_{n-1}$ for the gaps. For a time $t$ after the
$n$-th event, $s=t-t_n$ denotes the time elapsed since that event.

\begin{figure}[t]\centering
\begin{tikzpicture}[
    font=\small,
    >={Stealth[length=2.2mm]},
    box/.style={draw, thick, minimum height=8mm, minimum width=27mm, align=center, inner sep=3pt},
    state/.style={box, fill=black!6},
    lab/.style={font=\footnotesize, inner sep=1.5pt},
  ]
  \node (in) at (0,0) {$(\Delta_n,\,y_n)$};
  \node[box, minimum width=14mm] (gru) at (2.1,0) {GRU};
  \draw[->] (in) -- (gru);

  \node[state] (hk) at (7.0, 2.4)  {Hawkes state $\bm\xi^{\mathrm H}$};
  \node[state] (xi) at (7.0, 0.95) {Erlang state $\bm\xi$};
  \node[box]   (hz) at (7.0,-0.5)  {hazard terms};
  \node[box]   (rn) at (7.0,-1.95) {Erlang mixture};

  \coordinate (split) at (3.6,0);
  \draw[thick] (gru) -- node[lab, above] {$\bm h_n$} (split);
  \draw[->] (split) |- node[lab, above, pos=0.68] {gate, jump} (xi);
  \draw[->] (split) |- node[lab, above, pos=0.68] {$\mu, g, \alpha, \kappa$} (hz);
  \draw[->] (split) |- node[lab, above, pos=0.68] {$\pi, \omega$} (rn);

  \draw[->] (in.north) |- node[lab, above, pos=0.72] {$+\beta$ on mark $y_n$} (hk);

  \node[draw, dashed, rounded corners=2pt, inner sep=5pt, fit=(hk)(xi)(hz)(rn)] (cf) {};
  \node[lab, below=1pt of cf.south] {closed form for $t\in(t_n,t_{n+1}]$};

  \node[draw, thick, minimum width=5mm, minimum height=50mm, inner sep=0pt] (sum) at (11.0,0.225) {$\Sigma$};
  \draw[->] (hk.east) -- node[lab, above] {$\Theta\bm\xi^{\mathrm H}$} (hk.east -| sum.west);
  \draw[->] (xi.east) -- node[lab, above] {$C\bm\xi$} (xi.east -| sum.west);
  \draw[->] (hz.east) -- (hz.east -| sum.west);
  \draw[->] (rn.east) -- node[lab, above] {$p/S_{\mathrm{ren}}$} (rn.east -| sum.west);

  \node[box, dashed, minimum width=15mm] (mk) at (13.1,0.225) {mark law};
  \draw[->] (sum) -- node[lab, above] {$\lambda$} (mk);
  \node (lam) at (15.0,0.225) {$\lambda_y(t)$};
  \draw[->] (mk) -- (lam);
\end{tikzpicture}
\caption{EPT-X at one event. The GRU summary $\bm h_n$ sets the jump of the neural Erlang state,
the hazard parameters and the renewal weights, while the Hawkes state is driven by the event mark
alone. The grey boxes are the states carried from one event to the next. Until the next event,
all four terms evolve in closed form. The dashed box is the optional residual mark law, which
splits the total intensity across marks without changing it.}\label{fig:eptx}
\end{figure}

\paragraph{EPT: a neural phase-type state.} The latent state $\bm\xi(t)\in\R_{\ge0}^P$ consists of
$L$ channels of Erlang chains of order $R$, each with its own learnable rate. Between events it
evolves exactly as in Lemma~\ref{lem:recursion}. At each event, a recurrent network summarises the
history: a GRU \citep{cho2014learning} receives an embedding of the mark $y_n$ and log transforms of
the gap $\Delta_n$ and produces a hidden vector $\bm h_n$. In EPT-X the gap enters through its
standardised logarithm, which keeps nanosecond-scale gaps distinguishable from zero. The state
then jumps to
\[
\bm\xi(t_n^+)=\sigma(W_g\bm h_n)\odot\bm\xi(t_n^-)+\mathrm{softplus}(W_\xi\bm h_n),
\]
where $\sigma$ is the logistic function, $\odot$ the elementwise product and $W_g,W_\xi$ learned
matrices. The first term lets the network forget part of the old state; the second adds a
nonnegative jump. For $t\in(t_n,t_{n+1}]$ the intensity of mark $y$ is
\begin{equation}\label{eq:ept}
\lambda^{\mathrm{EPT}}_y(t)=\mu_y+\sum_{p=1}^{P}C_{yp}\,\xi_p(t)
+g_{1y}\,e^{-\alpha s}+g_{2y}\bigl(1-e^{-\kappa s}\bigr)
+\sum_{q}\Theta_{yq}\,\xi^{\mathrm{H}}_q(t).
\end{equation}
The quantities $\mu_y$, $C_{yp}$, $g_{1y}$, $g_{2y}$, $\alpha$ and $\kappa$ are nonnegative outputs
of the network computed from $\bm h_n$, so they stay fixed until the next event. The two hazard
terms let the intensity decay or rise with $s$, but stay bounded. An unbounded Gompertz term would
overflow on heavy-tailed gaps. The last term is a \emph{linear multivariate phase-type Hawkes
process}: $\bm\xi^{\mathrm{H}}(t)$ collects Erlang chains on a fixed logarithmic grid of rates, one
group per source mark, and every event of mark $y$ adds $\beta_k$ to the first phase of the chains
of source $y$. Its weights $\Theta_{yq}\ge0$ are learned. This term is a residual path: when the
neural read-outs are zero, EPT \emph{is} the Hawkes process of Section~\ref{sec:msx}, so the network
only has to learn departures from linear excitation.

\paragraph{The renewal channel.} Order-book and social streams contain cascades, several events in
a very short interval followed by a pause. A linear Hawkes intensity describes excitation
\emph{between} cascades well, but it has no direct handle on the distribution of gaps
\emph{within} them. EPT-X therefore adds a second channel that models the next gap directly.

We fix $Q$ Erlang ``atoms'' indexed by $q$, each with an order $R_q$ and a rate $\beta_q$. Their means
$R_q/\beta_q$ lie on a logarithmic grid that spans the time scales of the data, and their orders
come from $\{1,4,16\}$, optionally up to $1024$. An atom of order $R_q$ has coefficient of variation
$R_q^{-1/2}$, so high orders give sharp peaks in log-time. From $\bm h_n$ the network outputs
three things. The first is a set of weights $\pi_1,\dots,\pi_Q,\pi_\infty\ge0$ that sum to one,
where $\pi_\infty$ is the weight of a \emph{defect} atom meaning ``the renewal channel produces no
event''. The second is, for each time scale, a distribution $\omega_{y\mid q}$ over the marks, so
that the mark of a burst may depend on how quickly it follows the last event. The third, which is
optional, is a bounded shift of each log-rate, which lets an atom move within its grid cell. With
these outputs, the renewal channel's joint density of the next gap $s$ and mark $y$, and its survival function,
are
\begin{equation}\label{eq:renewal}
p(s,y)=\sum_{q=1}^{Q}\pi_q\,\omega_{y\mid q}\,
\frac{\beta_q^{R_q}\,s^{R_q-1}e^{-\beta_q s}}{(R_q-1)!},\qquad
S_{\mathrm{ren}}(s)=\pi_\infty+\sum_{q=1}^{Q}\pi_q\,e^{-\beta_q s}\sum_{i=0}^{R_q-1}\frac{(\beta_q s)^i}{i!}.
\end{equation}
EPT-X adds the corresponding hazard to EPT:
\[
\lambda_y(t)=\lambda^{\mathrm{EPT}}_y(t)+\frac{p(s,y)}{S_{\mathrm{ren}}(s)} .
\]
Figure~\ref{fig:eptx} shows how the parts fit together. The network only acts at event times;
between events every term evolves in closed form.
\begin{proposition}\label{prop:eptx}
Let $\Lambda^{\mathrm{EPT}}(\Delta)=\int_{t_n}^{t_n+\Delta}\sum_y\lambda^{\mathrm{EPT}}_y(t)\,\mathrm{d}t$
be the compensator of EPT over a gap of length $\Delta$.
\begin{enumerate}
\item The compensator of EPT-X over the same gap is $\Lambda^{\mathrm{EPT}}(\Delta)-\log S_{\mathrm{ren}}(\Delta)$,
and both terms are in closed form.
\item If $\pi_\infty=1$, EPT-X coincides with EPT, which in turn contains the linear phase-type
Hawkes process.
\item Given $\bm h_n$, the gap produced by the renewal channel is the absorption time of a finite
continuous-time Markov chain, so its survival function, quantiles and moments are closed-form
matrix expressions.
\item As the grid of atoms is refined, the renewal channel can approximate any conditional
distribution of the next gap.
\end{enumerate}
\end{proposition}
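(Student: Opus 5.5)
I will prove the four parts separately; each reduces to a short computation on the phase-type objects already introduced.

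\emph{Part 1.} The key observation is that the renewal term is a hazard. Summing~(\ref{eq:renewal}) over marks, $\sum_y\omega_{y\mid q}=1$, so $\sum_y p(s,y)=\sum_q\pi_q f_q(s)$, where $f_q$ is the Erlang density of atom $q$. Because the Erlang survival function $e^{-\beta_q s}\sum_{i<R_q}(\beta_q s)^i/i!$ has derivative $-f_q(s)$ and $\pi_\infty$ is constant, we get $\sum_y p(s,y)=-S_{\mathrm{ren}}'(s)$. The summed renewal hazard is therefore $-\frac{\mathrm d}{\mathrm ds}\log S_{\mathrm{ren}}(s)$. Its integral from $0$ to $\Delta$ is $-\log S_{\mathrm{ren}}(\Delta)$, since $S_{\mathrm{ren}}(0)=\pi_\infty+\sum_q\pi_q=1$.

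Positivity of $S_{\mathrm{ren}}$ on $[0,\infty)$ also needs checking. If $\pi_\infty=0$, the survival function is strictly positive for finite $s$ anyway.

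For closed form of $\Lambda^{\mathrm{EPT}}$, I integrate each term of~(\ref{eq:ept}) separately:
\begin{itemize}
\item $\mu_y\Delta$ is immediate;
\item the hazard terms give $g_{1y}(1-e^{-\alpha\Delta})/\alpha$ and $g_{2y}\bigl(\Delta-(1-e^{-\kappa\Delta})/\kappa\bigr)$, with the limits $\alpha,\kappa\to0$ understood;
\item the neural state $\bm\xi$ and the Hawkes state $\bm\xi^{\mathrm H}$ are both Erlang chains evolving as in Lemma~\ref{lem:recursion}, so their integrals over the gap are the incomplete-gamma formulas of part~2 of that lemma.
\end{itemize}

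\emph{Part 2.} If $\pi_\infty=1$, then all $\pi_q=0$. Hence $p\equiv0$, $S_{\mathrm{ren}}\equiv1$, and $\lambda_y=\lambda^{\mathrm{EPT}}_y$. Setting the network outputs $\mu_y$, $C$, $g_{1}$ and $g_{2}$ to zero (or taking $\mu_y$ as a constant baseline) and keeping only $\Theta\bm\xi^{\mathrm H}$ gives exactly the linear phase-type Hawkes intensity. Its excitation kernels are $\sum_{k,r}\Theta\,g_{kr}$, as in~(\ref{eq:kernel}).

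\emph{Part 3.} I will write down the Markov chain explicitly. The initial distribution enters atom $q$ with probability $\pi_q$ and enters an absorbing-free trap state with probability $\pi_\infty$; in that trap the time is $+\infty$, which gives a defective phase-type law. Each atom is a series of $R_q$ phases with rate $\beta_q$. The generator $T$ is then block diagonal, with blocks $\beta_q(B-I)$ as in Lemma~\ref{lem:recursion}, and the initial vector is $\bm\alpha$. The standard phase-type identities follow:
\begin{itemize}
\item survival $\bm\alpha^\top e^{Ts}\bm1$, plus $\pi_\infty$;
\item density $\bm\alpha^\top e^{Ts}(-T\bm1)$;
\item conditional moments $k!\,\bm\alpha^\top(-T)^{-k}\bm1$.
\end{itemize}
Quantiles are obtained by monotone inversion of the closed-form survival function, which means root finding on an explicit function. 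The one caveat worth flagging is that this holds only ``up to inversion''.

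\emph{Part 4.} This is the main obstacle, because it depends on what ``refined'' means. I would argue pointwise in $\bm h_n$, i.e.\ for a fixed conditional law $F$ on $(0,\infty]$, allowing an atom at $\infty$ for the defect. I would cite the classical density result for Erlang mixtures~\citep{tijms1994stochastic,lee2010modeling}: the mixtures $\sum_j \bigl[F(j/\beta)-F((j-1)/\beta)\bigr]\mathrm{Erlang}(j,\beta)$ converge weakly to $F$ as $\beta\to\infty$. The refined grid must therefore contain, for large $\beta$, all orders $j$ up to some $J(\beta)$ with the common rate $\beta$. Equivalently, the means $j/\beta$ fill a fine grid, and the orders grow without bound.

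Two further ingredients are needed:
\begin{itemize}
\item the defect mass of $F$ is matched by $\pi_\infty$;
\item the joint law with marks is handled by $\omega_{y\mid q}$, chosen as the conditional mark law on each bin.
\end{itemize}
Finally, to pass from a fixed $\bm h$ to a conditional law, I assume that the network output can realise any measurable weight map (universal approximation of the softmax head). This gives approximation in distribution for each history.
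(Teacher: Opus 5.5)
Your proposal is correct and follows the paper's proof part by part, with more detail. Your remark in Part~4 that the atom orders must grow without bound sharpens the paper's one-line appeal to density of Erlang mixtures, since with orders fixed at $\{1,4,16\}$, refining the means alone cannot approximate, for instance, a point mass. Two small fixes in Part~3: with blocks $\beta_q(B-I)$ and the row-vector formula $\bm\alpha^\top e^{Ts}\bm 1$, each atom must be entered at phase $R_q-1$ (or use $B^\top$ and enter at phase $0$); and $k!\,\bm\alpha^\top(-T)^{-k}\bm 1$ is $\mathbb{E}[\tau^k;\tau<\infty]$, which you must divide by $1-\pi_\infty$ to get conditional moments.
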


\begin{proof}
\emph{Part 1.} The intensity of EPT-X is a sum of two intensities, so its compensator is the sum of
the two compensators. For the renewal channel, $\sum_y p(s,y)=-\frac{\mathrm{d}}{\mathrm{d}s}S_{\mathrm{ren}}(s)$,
because each Erlang density is the negative derivative of its survival function. Hence
\[
\int_0^{\Delta}\frac{\sum_y p(s,y)}{S_{\mathrm{ren}}(s)}\,\mathrm{d}s
=-\int_0^{\Delta}\frac{\mathrm{d}}{\mathrm{d}s}\log S_{\mathrm{ren}}(s)\,\mathrm{d}s
=-\log S_{\mathrm{ren}}(\Delta),
\]
using $S_{\mathrm{ren}}(0)=1$. The EPT part is closed form by Lemma~\ref{lem:recursion} and by direct
integration of the two exponential hazard terms.

\emph{Part 2.} If $\pi_\infty=1$, all other weights are zero. Then $p\equiv0$ and $S_{\mathrm{ren}}\equiv1$,
so the renewal hazard vanishes. Setting the neural read-outs of EPT to zero leaves the Hawkes term.

\emph{Part 3.} An Erlang distribution of order $R$ is the absorption time of a chain of $R$
exponential phases. A mixture of such chains, entered at random according to the weights
$\pi_q$, together with an absorbing ``never'' state of weight $\pi_\infty$, is again a finite
continuous-time Markov chain with one absorbing state.

\emph{Part 4.} Mixtures of Erlang distributions are dense in the distributions on $(0,\infty)$
\citep{tijms1994stochastic,lee2010modeling}. The renewal channel is such a mixture, with weights
that depend on the history. This is the same expressiveness as a conditional mixture density
\citep{shchur2020intensity}, with the difference that the intensity and compensator stay exact.
\end{proof}

\paragraph{Training and prediction.} We maximise the exact log-likelihood
$\sum_n\log\lambda_{y_n}(t_n)-\Lambda$, where $\Lambda$ is the total compensator, so the gradients
contain no sampling noise. For large orders we evaluate $S_{\mathrm{ren}}$ through the regularised
upper incomplete gamma function, and we take care that its gradient stays finite at zero gaps. The
log-likelihood splits exactly into a time term, $\log\lambda(t_n)-\Lambda$ with
$\lambda=\sum_y\lambda_y$, and a mark term, $\log\bigl(\lambda_{y_n}(t_n)/\lambda(t_n)\bigr)$, which we
report separately. Predictions of the next event are Bayes-optimal functionals of the exact
density: the expected gap is $\int_0^\infty S(s)\,\mathrm{d}s$, where $S$ is the survival function
of the full model, and the predicted mark is $\arg\max_y\int_0^\infty\lambda_y(s)S(s)\,\mathrm{d}s$.
Both integrals are computed by one-dimensional quadrature on a logarithmic grid.

The rest of this section describes how the model is evaluated in parallel, two optional
components, and a way to start training from the certified fit of Section~\ref{sec:msx}.

\paragraph{Parallel evaluation.} The encoder reads only the observed gaps and marks, never the
latent state. All summaries $\bm h_1,\dots,\bm h_N$ can therefore be computed first, in one call to
the recurrent network. Given them, the state updates are affine.

\begin{lemma}\label{lem:scan}
Let $E(\Delta)$ be the block-diagonal transfer matrix of the Erlang chains over a gap $\Delta$
(Lemma~\ref{lem:recursion}), and let $D_n=\mathrm{diag}\bigl(\sigma(W_g\bm h_n)\bigr)E(\Delta_n)$ and
$\bm b_n=\mathrm{softplus}(W_\xi\bm h_n)$. Then $\bm\xi(t_n^+)=D_n\bm\xi(t_{n-1}^+)+\bm b_n$, and all
post-event states follow from a prefix scan in $O(\log N)$ parallel steps. Every entry of every
$D_n$ lies in $[0,1]$.
\end{lemma}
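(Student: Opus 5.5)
The plan has three parts, matching the three claims: the affine recursion, the prefix scan, and the bound on the entries of $D_n$.

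\emph{The affine recursion.} Between $t_{n-1}$ and $t_n$ there is no event. By Lemma~\ref{lem:recursion}, each Erlang channel evolves linearly over the gap, so $\bm\xi(t_n^-)=E(\Delta_n)\bm\xi(t_{n-1}^+)$. Here $E(\Delta)$ is block diagonal, with one block $e^{\beta_\ell(B-I)\Delta}$ per channel $\ell$ and its learnable rate. Substituting this into the jump rule $\bm\xi(t_n^+)=\sigma(W_g\bm h_n)\odot\bm\xi(t_n^-)+\mathrm{softplus}(W_\xi\bm h_n)$ gives the result directly. An elementwise product with a vector is left multiplication by its diagonal matrix, so $\bm\xi(t_n^+)=D_n\bm\xi(t_{n-1}^+)+\bm b_n$. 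The key observation is that $D_n$ and $\bm b_n$ depend only on $\bm h_n$ and $\Delta_n$. These come from the observed gaps and marks, never from $\bm\xi$, so all the pairs $(D_n,\bm b_n)$ can be formed before any state is known.

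\emph{The prefix scan.} Identify each step with the affine map $f_n(\bm x)=D_n\bm x+\bm b_n$. Composition of such maps is closed and associative:
\[
(D_2,\bm b_2)\circ(D_1,\bm b_1)=(D_2D_1,\;D_2\bm b_1+\bm b_2).
\]
Then $\bm\xi(t_n^+)=(f_n\circ\cdots\circ f_1)(\bm\xi(t_0^+))$. The standard parallel prefix algorithm (Blelloch, or Hillis--Steele) computes all $N$ prefix compositions with an associative operator in $O(\log N)$ parallel steps. Each application of the operator is a matrix product and a matrix--vector product of fixed dimension $P$. These costs are independent of $N$.

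\emph{Entries of $D_n$ in $[0,1]$.} The gate $\sigma(\cdot)$ takes values in $(0,1)$, so it suffices to show that every entry of $E(\Delta)$ lies in $[0,1]$. Row scaling by factors in $(0,1)$ preserves this. By Step~2 of the proof of Lemma~\ref{lem:recursion}, the nonzero entries of a block are $w_j=e^{-y}y^j/j!$ with $y=\beta\Delta\ge0$. Each is nonnegative, and each is a single term of the Poisson series $\sum_j e^{-y}y^j/j!=1$, hence at most $1$. Off-block entries are zero.

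\emph{Main obstacle.} The only point needing care is whether the scan really is parallel: it must not secretly require the state. This rests on the encoder reading only $(\Delta_n,y_n)$, which the model definition guarantees. I would also note that the $[0,1]$ bound on $D_n$ keeps the composed products $D_n\cdots D_m$ from blowing up numerically, although the lemma does not require this.
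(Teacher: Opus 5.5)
Your proposal is correct and follows the paper's proof essentially step for step. You substitute the gap evolution of Lemma~\ref{lem:recursion} into the jump rule, use associativity of affine composition for the scan, and note that the entries of $E(\Delta)$ are Poisson probabilities scaled by gates in $(0,1)$.

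One small caveat concerns your closing aside, which is not needed for the lemma and which the paper also makes. Entries in $[0,1]$ do not by themselves keep the products $D_n\cdots D_m$ bounded (powers of the all-ones matrix grow). What does keep them bounded is that each row of $D_n$ sums to at most $\sum_{j\le r}w_j\le 1$.
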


\begin{proof}
The first claim restates the jump rule after evolving the state over the gap by Lemma~\ref{lem:recursion}.
Affine maps compose associatively, $(D_2,\bm b_2)\circ(D_1,\bm b_1)=(D_2D_1,\,D_2\bm b_1+\bm b_2)$,
so a parallel prefix scan applies. The entries of $E(\Delta)$ are Poisson probabilities and the gates
lie in $(0,1)$, so the entries of $D_n$ lie in $[0,1]$ and the scan cannot overflow.
\end{proof}

The Hawkes state follows the same scan without gates. Once the post-event states are known, the
intensity and compensator of every gap are evaluated independently. The likelihood is the same as
with the event loop; only the order of computation changes.

\paragraph{Residual mark law.} The compensator depends only on the total intensity
$\lambda=\sum_y\lambda_y$. The split of $\lambda$ across marks can therefore be made more flexible
without losing exactness. We set
\[
\tilde\lambda_y(t)=\lambda(t)\,\mathrm{softmax}_y\bigl(\log\lambda_y(t)+m_y(\bm h_n,s)\bigr),
\]
where $m$ is a small network of the history summary and the elapsed time $s$. The total intensity,
and hence the compensator, is unchanged. With $m=0$ the marks are those of EPT-X, so this is a
residual correction. Separate models for times and marks were used before by
\citet{shchur2020intensity} and \citet{panos2024decomposable}; here the time model keeps its Hawkes
structure.

\paragraph{Data-adaptive atoms.} A logarithmic grid spends atoms evenly over all time scales. When
the gaps concentrate in narrow bands, few atoms fall inside them. We therefore add atoms at $Q$
equal-mass quantiles of the training gaps. The order of each such atom is set so that its
coefficient of variation matches the log-distance $d_q$ to its nearest neighbour,
$R_q=\min\{\lceil d_q^{-2}\rceil,4096\}$. Atoms are then sharpest where the data are densest, much
like an adaptive-bandwidth kernel estimate, and the compensator stays exact.

\paragraph{A certified starting point.} The Hawkes backbone of EPT-X can use the dictionary of an
MSX fit, and MSX gives its global optimum with a certificate (Proposition~\ref{prop:gap}).

\begin{proposition}\label{prop:floor}
Initialise EPT-X with the backbone weights $\Theta$ and baselines $\mu$ of the MSX optimum on the
training data, and with the neural read-outs, hazard terms and renewal weights at a scale
$\varepsilon>0$. Then:
\begin{enumerate}
\item as $\varepsilon\to0$, the initial log-likelihood of EPT-X on any data converges to that of the
MSX fit, which is within the certified gap of the best linear phase-type Hawkes process on this
dictionary;
\item if the model is selected by validation likelihood over all training epochs including the
initial one, its validation likelihood is at least that of the initial model.
\end{enumerate}
\end{proposition}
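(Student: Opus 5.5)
The plan is to prove the two parts separately. Part 1 is a continuity argument in $\varepsilon$. Part 2 is a property of the selection rule, not of the optimiser.

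\emph{Part 1.} Write the EPT-X intensity at initialisation as $\lambda_y(t;\varepsilon)=\mu_y+\sum_q\Theta_{yq}\xi^{\mathrm H}_q(t)+\varepsilon\,r_y(t)$. Here $r_y$ collects the neural read-out $C\bm\xi$, the two bounded hazard terms and the renewal hazard $p/S_{\mathrm{ren}}$, each evaluated at unit scale. The renewal term needs care. ``Renewal weights at scale $\varepsilon$'' should mean $\pi_q=O(\varepsilon)$ for finite $q$ and $\pi_\infty=1-O(\varepsilon)$. Then $p=O(\varepsilon)$, $S_{\mathrm{ren}}\ge\pi_\infty\ge 1/2$ for small $\varepsilon$, and the renewal hazard is $O(\varepsilon)$ uniformly in $s$. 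This uses Proposition~\ref{prop:eptx}, part 2, in its limiting form. The neural state $\bm\xi$ has jumps of size $O(\varepsilon)$ by the softplus read-out. By Lemma~\ref{lem:scan} it stays bounded on any finite data set, since its transfer entries lie in $[0,1]$. All perturbation terms are therefore $O(\varepsilon)$ on a finite stream with finite horizon.

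For the event terms, the MSX baselines are positive where events occur, so $\lambda_{y_n}(t_n;0)>0$ and $\log$ is continuous there. For the compensator, each gap contributes $\Lambda^{\mathrm{EPT}}-\log S_{\mathrm{ren}}$ by Proposition~\ref{prop:eptx}, part 1, and this perturbation is $O(\varepsilon\cdot T)$. Hence the log-likelihood converges to the value at $\varepsilon=0$. That value is the likelihood of the linear phase-type Hawkes process with the MSX parameters, i.e.\ the MSX fit. Proposition~\ref{prop:gap} then gives $\ell^\star-\ell(\bm\theta)\le N\log s+(\bm\theta^\top(\bm\Phi+\bm\gamma)-N)$ for each type. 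Summing over types bounds the distance to the best linear Hawkes process on the dictionary.

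Two points need checking. First, the MSX intensity includes time-of-day and window terms that EPT-X may not have. I would state the identification ``MSX fit'' for the matching linear model on this dictionary, with $\mu$ constant, which is the setting used in prediction. Second, the penalised and unpenalised optima differ by at most $\bm\gamma^\top\bm\theta$, which I would absorb into the statement or set $\bm\gamma=0$.

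\emph{Part 2.} This is immediate. The selected model maximises validation likelihood over a set of checkpoints that contains epoch $0$, the initial model, so its value is at least the initial one. Combined with Part 1, this gives the floor: as $\varepsilon\to0$, the selected validation likelihood is at least that of MSX, up to $o(1)$.

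I expect the main obstacle to be making the $\varepsilon$-scaling precise for the renewal channel. There the natural parametrisation is a softmax, so ``scale $\varepsilon$'' must be read as the defect weight tending to one. I would also need uniform control of $-\log S_{\mathrm{ren}}$ over long gaps, which the lower bound $S_{\mathrm{ren}}\ge\pi_\infty$ supplies.
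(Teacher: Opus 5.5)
Your proposal is correct and follows the paper's proof. At $\varepsilon=0$ EPT-X reduces to the backbone by Proposition~\ref{prop:eptx}, part 2; continuity, with intensities bounded away from zero, gives convergence; Proposition~\ref{prop:gap} supplies the certificate; and part 2 is selection over a set that contains the initial model.

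Your uniform bounds, in particular $S_{\mathrm{ren}}\ge\pi_\infty$ controlling $-\log S_{\mathrm{ren}}$, and your caveat that the certificate is for the penalised objective are refinements that the paper's short proof leaves implicit. The one loose claim is that the jumps of $\bm\xi$ are $O(\varepsilon)$, which softplus does not give by itself. You do not need it, since $C=O(\varepsilon)$ and $\bm\xi$ bounded on finite data suffice.
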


\begin{proof}
\emph{Part 1.} At $\varepsilon=0$ the read-outs, hazards and renewal weights vanish, so EPT-X equals
the backbone (Proposition~\ref{prop:eptx}, part 2), whose intensity and compensator are those of
MSX. The intensities and compensators depend continuously on $\varepsilon$ and are bounded below by
$\mu>0$, so the log-likelihood converges. The second statement is Proposition~\ref{prop:gap}.

\emph{Part 2.} The selected model maximises validation likelihood over a set that contains the
initial model.
\end{proof}

The proposition links the two models of this paper: the certified linear fit is the floor from
which the neural model starts. The next section tests both.

\section{Experiments}\label{sec:experiments}

We test the two models in turn. Synthetic data with known ground truth show what MSX recovers.
Public benchmarks and order-book streams then compare EPT-X with published and re-run baselines,
and ablations show where its gains come from. All configuration choices are made on validation
data, and the test split is scored once.

\subsection{Synthetic recovery}\label{sec:synthetic}
We simulate five ground truths by exact cluster sampling. Several of them lie deliberately
outside the MSX class:
\begin{itemize}
\item S1: a single exponential kernel;
\item S2: a power-law (Lomax) kernel with tail exponent $1.5$;
\item S3: a gamma-shaped delayed peak plus a fast exponential;
\item S4: a sparse four-dimensional network whose kernels have components at 10\,ms and 30\,s;
\item S5: a sparse ten-dimensional network of power-law kernels.
\end{itemize}
Each scenario has ten windows per split and five seeds. We compare MSX with the exponential and
sum-of-exponential maximum-likelihood estimators, the nonparametric EM, ADM4, the conditional-law
method and NPHC. MSX-auto chooses its Erlang order and $\ell_1$ level on held-out training
windows.

Table~\ref{tab:track-a} reports how far each method's held-out log-likelihood falls below that of
the true model. When the truth is a single exponential, the estimators that assume exactly this
form are best. Under a power law, the sum-of-exponential estimators and MSX
perform about equally. When the kernel has structure that no single family captures (a delayed
peak, two separated time scales or a power-law network), MSX-auto has the smallest deficit, 7 to
10 times smaller than the best competitor's. On the two network scenarios it also roughly halves
the best competitor's branching-matrix error and recovers the causal graph perfectly (AUC $=1$,
as do several competitors). On the delayed peak, the nonparametric EM estimates the branching
matrix slightly better (Table~\ref{tab:track-a-full}). Over all 100 MSX fits, the certificate of
Proposition~\ref{prop:gap} was at most $3.3\times10^{-4}$ nats.

\begin{table}[!htbp]\centering\small
\caption{Synthetic recovery: held-out log-likelihood deficit to the true model ($10^{-3}$
nats per event, lower is better; mean $\pm$ s.d.\ over five seeds). Bold: best; underlined:
second best; shaded: our estimators.}\label{tab:track-a}
\setlength{\tabcolsep}{5pt}
\begin{tabular}{lrrrrr}
\toprule
& S1 exp & S2 power law & S3 hump & S4 multiscale & S5 network \\
\midrule
\multicolumn{6}{l}{\textit{Classical estimators}} \\[1pt]
\quad Exp.\ kernel MLE & \textbf{0.05}{\scriptsize\,$\pm$0.05} & 26.05{\scriptsize\,$\pm$0.50} & 101.47{\scriptsize\,$\pm$1.81} & 711.28{\scriptsize\,$\pm$53.09} & 426.80{\scriptsize\,$\pm$368.57} \\
\quad Sum-of-exp.\ MLE & 0.92{\scriptsize\,$\pm$0.13} & \textbf{0.18}{\scriptsize\,$\pm$0.12} & 11.51{\scriptsize\,$\pm$0.46} & -- & -- \\
\quad ADM4 & \textbf{0.05}{\scriptsize\,$\pm$0.05} & 26.05{\scriptsize\,$\pm$0.50} & 101.47{\scriptsize\,$\pm$1.81} & 3.13{\scriptsize\,$\pm$0.25} & 26.51{\scriptsize\,$\pm$0.29} \\
\quad Hawkes EM (nonparam.) & 1.02{\scriptsize\,$\pm$0.07} & 1.73{\scriptsize\,$\pm$0.21} & 2.24{\scriptsize\,$\pm$0.12} & 4.55{\scriptsize\,$\pm$0.29} & 5.23{\scriptsize\,$\pm$0.25} \\
\quad Conditional law & 2.84{\scriptsize\,$\pm$1.03} & 4.91{\scriptsize\,$\pm$0.59} & 10.66{\scriptsize\,$\pm$2.13} & 114.94{\scriptsize\,$\pm$1.20} & 78.64{\scriptsize\,$\pm$4.79} \\
\addlinespace[3pt]\multicolumn{6}{l}{\textit{Ours}} \\[1pt]
\rowcolor{oursbg} \quad MSX ($R=2$) & \underline{0.23}{\scriptsize\,$\pm$0.12} & 0.32{\scriptsize\,$\pm$0.12} & 5.90{\scriptsize\,$\pm$0.36} & \underline{0.54}{\scriptsize\,$\pm$0.12} & 1.19{\scriptsize\,$\pm$0.07} \\
\rowcolor{oursbg} \quad MSX ($R=4$) & 0.28{\scriptsize\,$\pm$0.15} & 0.45{\scriptsize\,$\pm$0.10} & \underline{0.31}{\scriptsize\,$\pm$0.15} & 0.70{\scriptsize\,$\pm$0.11} & 1.63{\scriptsize\,$\pm$0.09} \\
\rowcolor{oursbg} \quad MSX, exponential only & 0.91{\scriptsize\,$\pm$0.13} & \underline{0.19}{\scriptsize\,$\pm$0.11} & 11.51{\scriptsize\,$\pm$0.46} & \textbf{0.32}{\scriptsize\,$\pm$0.08} & \underline{0.82}{\scriptsize\,$\pm$0.08} \\
\rowcolor{oursbg} \quad MSX-auto & 0.24{\scriptsize\,$\pm$0.12} & 0.33{\scriptsize\,$\pm$0.13} & \textbf{0.30}{\scriptsize\,$\pm$0.13} & \textbf{0.32}{\scriptsize\,$\pm$0.08} & \textbf{0.74}{\scriptsize\,$\pm$0.10} \\
\bottomrule
\end{tabular}

\end{table}

\FloatBarrier
\subsection{Public temporal point process benchmarks}\label{sec:public}
We now turn to prediction and compare EPT-X with published neural models. We use the EasyTPP splits of Amazon, Retweet, Taxi, Taobao and StackOverflow and follow the
protocol of \citet{chang2025deep}. The metric is the log-likelihood per scored event, with the
first event of each sequence used only for conditioning, and models are selected on validation
likelihood. Published numbers are taken from \citet{chang2025deep} and \citet{boyd2025hyper}, who
use the same splits and protocol. To check our evaluation code, we re-ran S2P2 on Taxi and obtained
0.524, against 0.522 in the paper. EPT-X reports the exact likelihood.

EPT-X has the best likelihood on three of the five datasets (Table~\ref{tab:track-b}). On Retweet,
the largest dataset with 2.2 million training events, it reaches $\eptxLLRetweet\pm\eptxSDRetweet$
against $-6.348$ for the best published model, with a 25-epoch budget imposed by compute. On Taobao,
starting from the certified MSX fit (Proposition~\ref{prop:floor}) raises EPT-X to
$\eptxLLTaobao\pm\eptxSDTaobao$, against IntensityFree's published $1.318\pm0.017$. On
StackOverflow, the residual mark law gives $\eptxLLStackOverflow\pm\eptxSDStackOverflow$ against
S2P2's $-2.163\pm0.009$, so the two are within one published standard deviation.

On Taxi, EPT-X reaches $\eptxLLTaxi\pm\eptxSDTaxi$ against $0.522\pm0.004$ for S2P2 and HHP, and
its mark term equals that of our S2P2 re-run ($-0.210$ against $-0.211$).

Amazon needs a separate comment. In the EasyTPP release every gap lies in one of two intervals,
$[0.010,0.015]$ (\amzShortShare\% of the test gaps) or $[0.70,0.80]$, and within each interval the
gaps are uniform; Kolmogorov--Smirnov tests do not reject uniformity at the 1\% level on any split.
The timestamps were evidently resampled during preprocessing. A history-free density made of the
two boxes already reaches a time log-likelihood of \amzBoxLL{} nats per event, so on Amazon the
time term mostly measures how sharply a model resolves two hard edges and how well it predicts
which interval comes next. Smooth densities, such as Erlang or log-normal mixtures, lose at the
edges. Data-adaptive atoms and the certified start raise EPT-X to
$\eptxLLAmazon\pm\eptxSDAmazon$, second behind S2P2. We did not add box-shaped components, because
they would fit a preprocessing artefact rather than event dynamics.

The linear MSX model comes close to the neural models on Taobao and Retweet, whose dynamics are
largely self-exciting.

\paragraph{Tuning budgets.} We selected the EPT-X configuration on validation data from a search
over Erlang orders, grid size, encoder depth and width, an attention encoder, batch size, weight
decay and the three new components, with one seed per variant (Table~\ref{tab:ablation} and
Appendix~\ref{app:details}). The published baselines, in contrast, carry their authors' own tuning,
and our order-book baselines use the authors' EasyTPP configurations with one seed each. The
margins on Retweet, Taobao and the order books are large compared with any seed-to-seed variation
we observed.

\begin{table}[!htbp]\centering\small
\caption{Public benchmarks: test log-likelihood per event (higher is better). Published
numbers are from \citet{chang2025deep} and \citet{boyd2025hyper}. Our rows give the mean $\pm$ s.d.\ over
seeds (EPT-X: \eptxSeeds; other rows one seed), and the EPT-X configuration is selected per
dataset on validation. Bold: best; underlined: second best; shaded: our models.}
\label{tab:track-b}
\setlength{\tabcolsep}{5pt}
\begin{tabular}{lrrrrr}
\toprule
& Amazon & Retweet & Taxi & Taobao & StackOverflow \\
\midrule
\multicolumn{6}{l}{\textit{Neural TPPs, published (\pubsrc)}} \\[1pt]
\quad RMTPP & $-$2.136 & $-$7.098 & 0.346 & 1.003 & $-$2.480 \\
\quad SAHP & $-$2.074 & $-$6.708 & 0.298 & 1.168 & $-$2.341 \\
\quad THP & $-$2.096 & $-$6.659 & 0.372 & 0.790 & $-$2.338 \\
\quad IntensityFree & 0.496 & $-$10.344 & 0.453 & \underline{1.318} & $-$2.233 \\
\quad MHP & $-$2.091 & $-$6.564 & 0.370 & 0.636 & $-$2.346 \\
\quad NHP & 0.129 & $-$6.348 & 0.514 & 1.157 & $-$2.241 \\
\quad AttNHP & 0.484 & $-$6.499 & 0.493 & 1.259 & $-$2.194 \\
\quad S2P2 & \textbf{0.781} & $-$6.365 & \textbf{0.522} & 1.304 & \underline{$-$2.163} \\
\quad HHP & 0.603 & $-$6.357 & \textbf{0.522} & 1.273 & $-$2.195 \\
\addlinespace[3pt]\multicolumn{6}{l}{\textit{Classical Hawkes, our runs}} \\[1pt]
\quad ADM4 & $-$2.260 & $-$6.832 & $-$0.315 & 1.184 & $-$2.473 \\
\quad Conditional law & $-$4.469 & $-$8.317 & $-$2.922 & $-$63.919 & $-$16.140 \\
\quad Hawkes EM (nonparam.) & $-$1.002 & $-$6.655 & $-$0.308 & 1.269 & $-$2.370 \\
\addlinespace[3pt]\multicolumn{6}{l}{\textit{Ours}} \\[1pt]
\rowcolor{oursbg} \quad MSX ($R=2$) & $-$1.943 & $-$6.569 & $-$0.214 & 1.299 & $-$2.292 \\
\rowcolor{oursbg} \quad MSX-auto & $-$1.927 & $-$6.568 & $-$0.193 & 1.305 & $-$2.292 \\
\rowcolor{oursbg} \quad EPT & $-$1.203 & \underline{$-$5.731} & 0.437 & 1.311 & $-$2.186 \\
\rowcolor{oursbg} \quad EPT-X & \underline{0.723}{\scriptsize\,$\pm$0.002} & \textbf{$-$5.676}{\scriptsize\,$\pm$0.003} & \underline{0.517}{\scriptsize\,$\pm$0.002} & \textbf{1.361}{\scriptsize\,$\pm$0.004} & \textbf{$-$2.162}{\scriptsize\,$\pm$0.001} \\
\bottomrule
\end{tabular}

\end{table}

\paragraph{Next-event prediction.} Table~\ref{tab:pred} reports the two prediction metrics used by
\citet{chang2025deep}. For the time of the next event, EPT-X has the lowest RMSE on Retweet and
StackOverflow, ties the best published value to three decimals on Taxi and Taobao, and is second on
Amazon (0.327 against 0.324 for HHP). Mark accuracies are reported in the lower panel.

\begin{table}[!htbp]\centering\small
\caption{Next-event prediction on the public benchmarks: time RMSE (top, lower is better; Retweet
in seconds) and mark accuracy (bottom, \%, higher is better). Published numbers from
\citet{chang2025deep} and \citet{boyd2025hyper}, Table~2(b,c) of each. EPT-X: mean $\pm$ s.d.\ over seeds of the
validation-selected configuration. Bold: best; underlined: second best; shaded: ours.}\label{tab:pred}
\setlength{\tabcolsep}{5pt}
\begin{tabular}{lrrrrr}
\toprule
& Amazon & Retweet & Taxi & Taobao & StackOverflow \\
\midrule
\multicolumn{6}{l}{\textit{Published (\pubsrc)}} \\[1pt]
\quad RMTPP & 0.338 & 16488 & 0.283 & \underline{0.126} & 1.049 \\
\quad SAHP & 0.335 & 16102 & 0.290 & \underline{0.126} & 1.031 \\
\quad THP & 0.332 & 16268 & 0.285 & \textbf{0.125} & 1.033 \\
\quad IntensityFree & \underline{0.327} & 16625 & 0.362 & \textbf{0.125} & 1.340 \\
\quad MHP & 0.329 & 16109 & 0.284 & \underline{0.126} & 1.046 \\
\quad NHP & 0.339 & 15911 & \underline{0.282} & \underline{0.126} & 1.019 \\
\quad AttNHP & 2.656 & 16171 & 1.739 & 0.130 & 1.256 \\
\quad S2P2 & \underline{0.327} & 15987 & \textbf{0.281} & \underline{0.126} & \underline{1.014} \\
\quad HHP & \textbf{0.324} & \underline{15589} & \textbf{0.281} & \underline{0.126} & 1.016 \\
\addlinespace[3pt]\multicolumn{6}{l}{\textit{Ours}} \\[1pt]
\rowcolor{oursbg} \quad EPT-X & \underline{0.327}{\scriptsize\,$\pm$0.000} & \textbf{14402}{\scriptsize\,$\pm$32} & \textbf{0.281}{\scriptsize\,$\pm$0.000} & \textbf{0.125}{\scriptsize\,$\pm$0.000} & \textbf{1.011}{\scriptsize\,$\pm$0.000} \\
\bottomrule
\end{tabular}
\\[6pt]
\begin{tabular}{lrrrrr}
\toprule
& Amazon & Retweet & Taxi & Taobao & StackOverflow \\
\midrule
\multicolumn{6}{l}{\textit{Published (\pubsrc)}} \\[1pt]
\quad RMTPP & 30.8 & 53.4 & 91.4 & 60.9 & 45.6 \\
\quad SAHP & 32.4 & 57.5 & 91.4 & 60.5 & 44.7 \\
\quad THP & 34.6 & 60.2 & 91.4 & 60.0 & 46.6 \\
\quad IntensityFree & 35.9 & 50.4 & 91.8 & 61.0 & 45.6 \\
\quad MHP & 35.1 & 60.0 & 91.4 & 60.7 & 46.5 \\
\quad NHP & 39.4 & \textbf{61.4} & 92.9 & \textbf{61.5} & 47.1 \\
\quad AttNHP & 38.9 & 60.7 & 92.6 & 61.3 & \textbf{48.2} \\
\quad S2P2 & \underline{40.7} & \underline{61.3} & \textbf{93.1} & 61.1 & \underline{47.5} \\
\quad HHP & \textbf{40.9} & \underline{61.3} & \underline{93.0} & \underline{61.4} & 47.1 \\
\addlinespace[3pt]\multicolumn{6}{l}{\textit{Ours}} \\[1pt]
\rowcolor{oursbg} \quad EPT-X & 35.9{\scriptsize\,$\pm$0.0} & 58.3{\scriptsize\,$\pm$0.1} & 91.7{\scriptsize\,$\pm$0.1} & \underline{61.4}{\scriptsize\,$\pm$0.1} & 45.4{\scriptsize\,$\pm$0.1} \\
\bottomrule
\end{tabular}

\end{table}

\FloatBarrier
\subsection{Limit-order-book event streams}\label{sec:lob}
Order-book streams are a harder regime than the public benchmarks: events arrive in bursts down to
microsecond gaps, across six event types. We use the public LOBSTER \citep{huang2011lobster} samples of AAPL, AMZN, GOOG, INTC and MSFT
from 21 June 2012, with six event types, a chronological 60/20/20 split and sequences of 128
events. EasyTPP rejects tied timestamps, so we displace tied messages by a uniform offset in
$(0,100\,\text{ns})$, and every model is trained and scored on these same files. The neural
baselines run in EasyTPP with the S2P2 authors' configurations, with batches of 32 to 64 instead
of 256 to fit an 8\,GB GPU. Each is trained with a 100-epoch budget, and the strongest ones also
with longer budgets. We report each at the budget with the best validation likelihood, so the
longer budgets can only help them.

EPT-X is the best model on every stock (Table~\ref{tab:track-c}). It is ahead of MSX-auto, EPT and
all seven neural baselines, and ahead of the best neural baseline by \trackCmarginLo{} to
\trackCmarginHi{} nats per event.

One baseline needs a correction. EasyTPP's IntensityFree replaces every gap below $10^{-5}$\,s by
$10^{-5}$\,s before evaluating its log-normal mixture, which can then put an arbitrarily narrow
component on that value. The result is not the likelihood of the observed gaps. On INTC and MSFT,
29\% and 32\% of the test gaps are this short (10 to 14\% on the other stocks), and there the
clamped model reports the highest numbers. Retrained and scored on the actual gaps with the same
1500-epoch budget, it loses 1.1 and 0.7 nats per event and falls below EPT-X, on INTC also below
MSX. For scale, a denser MSX dictionary or sharper EPT-X atoms change the INTC validation
likelihood by at most about 0.05. We report both versions, rank only the corrected one, and include
the patch in the \codename.

\begin{table}[!htbp]\centering\small
\caption{LOBSTER order books: test log-likelihood per event (higher is better; mean $\pm$
s.d.\ over seeds where several were run). Bold: best; underlined: second best; shaded: our models.
$^\ast$As shipped, IntensityFree clamps gaps at $10^{-5}$\,s before scoring, so its score is not
the likelihood of the observed data; it is greyed and not ranked. The IntensityFree row is
retrained and scored on the actual gaps.}\label{tab:track-c}
\setlength{\tabcolsep}{5pt}
\begin{tabular}{lrrrrr}
\toprule
& AAPL & AMZN & GOOG & INTC & MSFT \\
\midrule
\multicolumn{6}{l}{\textit{Classical Hawkes}} \\[1pt]
\quad Exp.\ kernel MLE & 1.961 & -- & $-$0.311 & -- & -- \\
\quad Hawkes EM (nonparam.) & 3.370 & 2.243 & 2.851 & 5.771 & 6.258 \\
\quad ADM4 & 1.961 & 1.067 & 1.306 & 4.787 & 5.211 \\
\quad Conditional law & $-$1.018 & $-$17.814 & $-$32.784 & 0.379 & $-$1.270 \\
\addlinespace[3pt]\multicolumn{6}{l}{\textit{Neural TPPs (EasyTPP, our runs)}} \\[1pt]
\quad RMTPP & 0.243 & $-$0.686 & $-$0.948 & 2.882 & 3.285 \\
\quad THP & $-$0.081 & $-$0.948 & $-$1.075 & 1.714 & 1.855 \\
\quad SAHP & 1.777 & 0.462 & $-$0.428 & 3.915 & 4.557 \\
\quad NHP & 1.374 & 0.842 & 0.846 & 3.874 & 4.183 \\
\quad AttNHP & 1.220 & $-$0.168 & 0.004 & 3.497 & 3.771 \\
\quad S2P2 & 2.315 & 0.717 & 0.990 & 4.949 & 5.416 \\
\quad IntensityFree & 3.108 & 2.103 & 2.726 & 5.713 & 6.377 \\
\quad \textcolor{gray}{IntensityFree$^\ast$ (as shipped)} & \textcolor{gray}{3.260} & \textcolor{gray}{1.921} & \textcolor{gray}{2.679} & \textcolor{gray}{6.790} & \textcolor{gray}{7.070} \\
\addlinespace[3pt]\multicolumn{6}{l}{\textit{Ours}} \\[1pt]
\rowcolor{oursbg} \quad MSX ($R=2$) & 3.326 & 2.249 & 2.870 & 5.812 & 6.307 \\
\rowcolor{oursbg} \quad MSX-auto & 3.361 & \underline{2.281} & \underline{2.883} & 5.814 & 6.310 \\
\rowcolor{oursbg} \quad EPT & \underline{3.380}{\scriptsize\,$\pm$0.008} & 2.229{\scriptsize\,$\pm$0.003} & 2.806{\scriptsize\,$\pm$0.005} & \underline{6.024}{\scriptsize\,$\pm$0.007} & \underline{6.530}{\scriptsize\,$\pm$0.005} \\
\rowcolor{oursbg} \quad EPT-X & \textbf{3.534}{\scriptsize\,$\pm$0.016} & \textbf{2.332}{\scriptsize\,$\pm$0.005} & \textbf{2.992}{\scriptsize\,$\pm$0.012} & \textbf{6.163}{\scriptsize\,$\pm$0.002} & \textbf{6.666}{\scriptsize\,$\pm$0.001} \\
\bottomrule
\end{tabular}

\end{table}

\FloatBarrier
\subsection{Ablations}\label{sec:ablation}
The results above use one selected configuration per dataset. Table~\ref{tab:ablation} shows what
each component contributes on validation data.

The two structural parts matter everywhere. Adding the renewal channel improves EPT on every
dataset, and without the Hawkes backbone EPT reaches only \eptNoBackbone{} nats per event on
LOBSTER AAPL, below the classical MSX estimator. The encoder, in contrast, is not the bottleneck.
A causal Transformer encoder \citep{vaswani2017attention} in place of the GRU lowers the validation
likelihood on all five datasets, and a deeper encoder (three residual layers with LayerNorm and
dropout), a wider one (128 units) and S2P2's larger batch bring no gain.

The remaining components are selected where the data call for them. Sharp atoms help where the gap
distribution has sharp peaks: by 1.1 nats per event on Amazon, where a finer grid helps further,
and by 0.05 on INTC. Data-adaptive atoms add to this on Amazon, whose gaps fill two narrow
intervals. The residual mark law helps where the marks follow the elapsed time closely, on Taxi
and StackOverflow. The certified warm start helps most on Taobao and Amazon, and on Retweet it
raises the validation likelihood by 0.03 nats per event. Elsewhere the default setting of each
component is selected.

\paragraph{Where the events come from.} Because every channel has a closed-form compensator, the
expected number of events splits exactly by channel (Table~\ref{tab:attribution}). The split differs
across data. On Taxi, four in five events come from the renewal channel, and on Retweet four in five
come from the neural state. On Amazon, whose gaps are drawn from two fixed intervals, the renewal
channel accounts for all but 0.4\% of them, as a renewal process should. On Taobao and StackOverflow, about half come from the Hawkes backbone,
and on the order books the renewal channel carries 41 to 58\%, with the backbone and the neural state
sharing the rest. The backbone is not constrained to be stationary, so its weights
describe excitation within a sequence rather than a long-run branching ratio.

\begin{table}[!htbp]\centering\small
\caption{Exact share (\%) of the expected number of test events contributed by each channel of the
selected EPT-X model (seed 0).}\label{tab:attribution}
\begin{tabular}{lrrrrr}
\toprule
Data & Hawkes backbone & renewal & baseline & hazard terms & neural state \\
\midrule
Amazon & 0.4 & 99.6 & 0.0 & 0.0 & 0.0 \\
Retweet & 9.5 & 8.2 & 0.9 & 1.0 & 80.4 \\
Taxi & 13.2 & 80.9 & 0.2 & 5.3 & 0.4 \\
Taobao & 57.0 & 36.0 & 6.0 & 0.9 & 0.1 \\
StackOverflow & 49.4 & 3.0 & 0.5 & 44.9 & 2.2 \\
LOBSTER AAPL & 30.6 & 40.6 & 1.5 & 1.5 & 25.8 \\
LOBSTER AMZN & 34.8 & 46.4 & 1.1 & 3.2 & 14.5 \\
LOBSTER GOOG & 32.7 & 43.5 & 0.6 & 2.4 & 20.7 \\
LOBSTER INTC & 9.3 & 53.6 & 1.1 & 1.6 & 34.3 \\
LOBSTER MSFT & 8.7 & 58.3 & 0.7 & 0.3 & 32.0 \\
\bottomrule
\end{tabular}

\end{table}

\paragraph{Cost.} Exactness does not have to cost time (Table~\ref{tab:cost}). Evaluated with the
parallel scan of Lemma~\ref{lem:scan}, an EPT-X epoch, including validation, is \scanSpeedTaxi{}
times faster than with the event loop on Taxi and \scanSpeedAAPL{} times faster on LOBSTER AAPL,
where it takes \scanSecAAPL{}\,s against 5.8\,s for S2P2 on the same GPU. The two evaluations give
the same likelihood and gradients up to floating-point rounding.

\begin{table}[!htbp]\centering\small
\caption{Training cost per epoch (including validation) and model size. EPT-X: one epoch of the
same model timed with the event loop and with the parallel scan. EasyTPP baselines: average over
their full epoch budget.}\label{tab:cost}
\begin{tabular}{llllrr}
\toprule
Data & Model & Compensator & Device & s / epoch & Parameters \\
\midrule
\rowcolor{oursbg} Taxi & EPT-X, event loop & exact & GPU & 9.1 & 146k \\
\rowcolor{oursbg}  & EPT-X, parallel scan & exact & GPU & 3.2 & 146k \\
 & S2P2 & Monte Carlo & GPU & 7.7 & 35k \\
\midrule
\rowcolor{oursbg} LOBSTER AAPL & EPT-X, event loop & exact & GPU & 13.0 & 76k \\
\rowcolor{oursbg}  & EPT-X, parallel scan & exact & GPU & 4.5 & 76k \\
 & S2P2 & Monte Carlo & GPU & 5.8 & 34k \\
 & NHP & Monte Carlo & GPU & 1.6 & 232k \\
 & AttNHP & Monte Carlo & GPU & 7.5 & 20k \\
 & THP & Monte Carlo & GPU & 0.4 & 51k \\
 & IntensityFree & exact & GPU & 0.4 & 7k \\
\bottomrule
\end{tabular}

\end{table}

\begin{table}[!htbp]\centering\footnotesize
\caption{EPT-X ablations: validation log-likelihood per event (higher is better). The EPT and
EPT-X rows are the final runs (mean $\pm$ s.d.\ over seeds); each variant is one seed. Encoder and
atom variants use a 100-epoch budget. The last three rows add one new component each, with the
final budget on Taxi, Taobao and StackOverflow and 100 epochs on Amazon and INTC, where they are
added to the sharp-atom configuration. Bold: best; underlined: second best.}\label{tab:ablation}
\setlength{\tabcolsep}{4pt}
\begin{tabular}{lrrrrr}
\toprule
& Taxi & Taobao & StackOverflow & Amazon & INTC \\
\midrule
\multicolumn{6}{l}{\textit{Validation log-likelihood per event}} \\[1pt]
\quad EPT (no renewal channel) & 0.392 & 1.200 & $-$2.186 & $-$1.193 & 5.360{\scriptsize\,$\pm$0.007} \\
\quad EPT-X & \underline{0.466}{\scriptsize\,$\pm$0.004} & \underline{1.205}{\scriptsize\,$\pm$0.002} & \underline{$-$2.161}{\scriptsize\,$\pm$0.003} & $-$0.487{\scriptsize\,$\pm$0.000} & 5.463{\scriptsize\,$\pm$0.003} \\
\quad \quad + sharp atoms (orders to 1024) & 0.457 & 1.194 & $-$2.173 & 0.604 & \textbf{5.516} \\
\quad \quad + sharp atoms, 48 scales & 0.455 & 1.177 & $-$2.182 & 0.643 & 5.511 \\
\quad \quad + deep encoder (3 layers) & 0.396 & 1.142 & $-$2.264 & $-$0.507 & 5.461 \\
\quad \quad + sharp atoms + deep encoder & 0.398 & 1.152 & $-$2.248 & 0.579 & 5.496 \\
\quad \quad + wide encoder (128 units) & 0.431 & 1.188 & $-$2.195 & $-$0.489 & 5.459 \\
\quad \quad + attention encoder (causal Transformer) & 0.415 & 1.169 & $-$2.238 & $-$0.531 & 5.416 \\
\quad \quad batch 256 (S2P2 setting) & 0.440 & 1.189 & $-$2.442 & $-$0.518 & 5.435 \\
\quad \quad + residual mark law & \textbf{0.477} & 1.198 & \textbf{$-$2.156} & 0.647 & \underline{5.512} \\
\quad \quad + data-adaptive atoms & 0.463 & 1.179 & $-$2.171 & \textbf{0.677} & 5.504 \\
\quad \quad + certified warm start & 0.457 & \textbf{1.243} & $-$2.162 & \underline{0.659} & 5.479 \\
\bottomrule
\end{tabular}

\end{table}

\FloatBarrier

\section{Application: how fast do markets absorb macroeconomic news?}\label{sec:application}

We now return to the measurement question that motivated MSX and apply it to market data around
scheduled macroeconomic releases. Figure~\ref{fig:pipeline} summarises the steps from raw data to
absorption times.

\begin{figure}[!htbp]\centering
\begin{tikzpicture}[
    font=\small,
    >={Stealth[length=2.2mm]},
    box/.style={draw, thick, minimum height=9mm, minimum width=20mm, align=center, inner sep=3pt},
    fitbox/.style={box, fill=black!6},
    res/.style={align=center, font=\footnotesize},
    node distance=6mm,
  ]
  \node[box] (tk)  {ticks and\\ releases};
  \node[box, right=of tk] (ev) {$\delta$-crossing\\ events};
  \node[box, right=of ev] (win) {release and\\ placebo windows};
  \node[fitbox, right=of win] (fit) {MSX fit\\ EM, Newton};
  \node[box, right=of fit] (emb) {Markov\\ embedding};

  \draw[->] (tk) -- (ev);
  \draw[->] (ev) -- (win);
  \draw[->] (win) -- (fit);
  \draw[->] (fit) -- node[font=\footnotesize, above] {$\hat{\bm\theta}$} (emb);

  \node[res, below=6mm of fit] (gap) {duality gap,\\ placebo test};
  \draw[->] (fit) -- (gap);
  \node[res, below=6mm of emb] (abs) {$\bm m(t)$, $t_{50}$, $t_{90}$,\\ drift};
  \draw[->] (emb) -- (abs);
\end{tikzpicture}
\caption{Measurement pipeline. Prices become $\delta$-crossing events, which are grouped into
release windows and matched placebo windows. A single certified MSX fit gives the duality gap and
the placebo test, and its Markov embedding gives the response $\bm m(t)$, the absorption times and
the expected price drift in closed form.}\label{fig:pipeline}
\end{figure}

\paragraph{Releases.} Our calendar covers 670 scheduled releases between January 2022 and August
2026. It contains CPI, nonfarm payrolls, PPI, retail sales, the GDP advance estimate and initial
jobless claims, each dated by its first vintage in ALFRED, together with FOMC statements and press
conferences, ECB decisions and press conferences, and Bank of Japan decisions. The surprise of a
release is the difference between its first print and a real-time expectation, standardised and
split into positive and negative parts. As expectations we use the Cleveland Fed inflation nowcast
for CPI, the Atlanta Fed GDPNow forecast for GDP, and a statistical forecast from earlier first
prints for the other releases.

\paragraph{Prices and events.} We use millisecond quotes for EUR/USD, USD/JPY, gold and an
S\&P~500 contract for difference, and Binance futures trades for bitcoin and ether. Each price
series is turned into events by $\delta$-crossing: an upward (downward) event is recorded whenever
the log price has risen (fallen) by $\delta$ since the previous event. We choose $\delta$ per asset,
between 0.25 and 3 basis points, so that the median rate is about 0.2 events per second. Each asset
thus contributes two event types, and the twelve types together form the dimensions of the model.
Windows run from 30 minutes before a release to 60 minutes after it, preceded by 30 minutes of
history that is used for the states but not scored.

\paragraph{Placebo windows.} Each release window is paired with a placebo window at the same clock
time on the same weekday, one to six weeks away, chosen so that no modelled release and none of
sixteen second-tier US releases falls within three hours. Placebo windows contain pseudo-releases
of their own type, so the placebo test is part of the likelihood rather than a separate
regression. The pooled model has \nWindows{} windows (\nNewsWindows{} release and
\nPlaceboWindows{} placebo) with \nEventsScored{} million scored events, and its certificate bounds
the distance to the optimum by \gapMaxPooled{} nats per dimension.

\paragraph{Endogeneity.} The fitted model first tells us how endogenous these markets are. The
spectral radius of the branching matrix is $\rho(G)=\rhoPooled$, so
roughly seven in every eight price events are triggered by earlier price events rather than
directly by news. Fitted year by year, $\rho(G)$ declines from 2022 to 2026 (\rhoByYear), and the
slowest echo mode decays in about \relaxPooled{} minutes.

\paragraph{Falsification.} Before reading the news kernels, we test them. Under the null
hypothesis of no effect, a nonnegative news kernel still
has a positive expected mass, because estimates cannot fall below zero. A comparison with
release-free placebo windows is therefore essential. With the $\ell_1$ level chosen by held-out
likelihood, \nPassPlacebo{} of the \nKinds{} release types exceed their placebo by at least 50\%
(Figure~\ref{fig:placebo}): CPI (\placeboCpi$\times$), payrolls (\placeboNfp$\times$), PPI
(\placeboPpi$\times$), the FOMC statement (\placeboFomc$\times$) and retail sales
(\placeboRetail$\times$). Jobless claims, the ECB and BoJ decisions and the press conferences show
no direct effect beyond their placebos. Two earlier specifications failed this test, and we report
them in Appendix~\ref{app:history}.

\begin{figure}[!htbp]\centering
\includegraphics[width=0.72\linewidth]{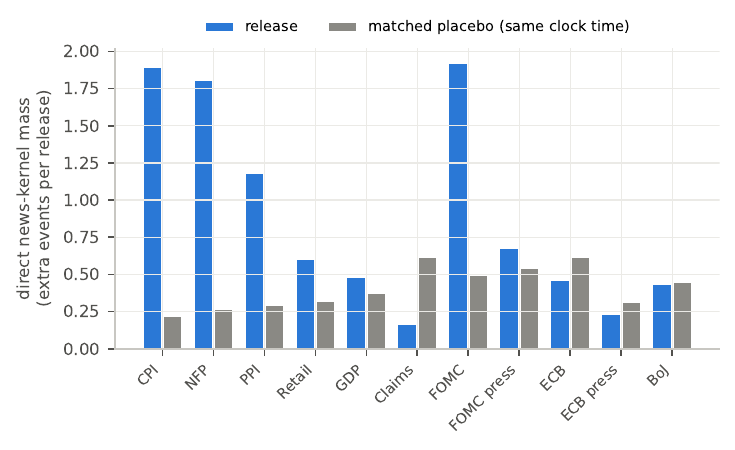}
\caption{Placebo test. Direct news-kernel mass, the expected number of extra events per release
summed over all assets, for each release type (blue) and for its matched placebo at the same clock
time on a release-free day (grey). A type passes if its mass exceeds the placebo mass by at least
50\%.}\label{fig:placebo}
\end{figure}

\paragraph{Echo-corrected absorption.} For the releases that pass the placebo test, the direct
news kernel is absorbed almost immediately: its median $t_{50}$ is \tDirMed{}\,s. The echoes
stretch the response considerably. Half of the extra activity arrives within \tEchoLo{} to
\tEchoHi{}\,s, and ninety percent within \tNinetyLo{} to \tNinetyHi{} minutes
(Figure~\ref{fig:absorption} and Table~\ref{tab:absorption}). The ``news half-life'' $\ln 2/\beta$
therefore understates absorption time by one to two orders of magnitude. Because positive and
negative surprises enter the news kernels separately, the model also gives expected price paths
after a release (Figure~\ref{fig:drift}). A higher-than-expected CPI print lowers bitcoin and
ether, and a lower one lifts them.

\begin{figure}[!htbp]\centering
\includegraphics[width=\linewidth]{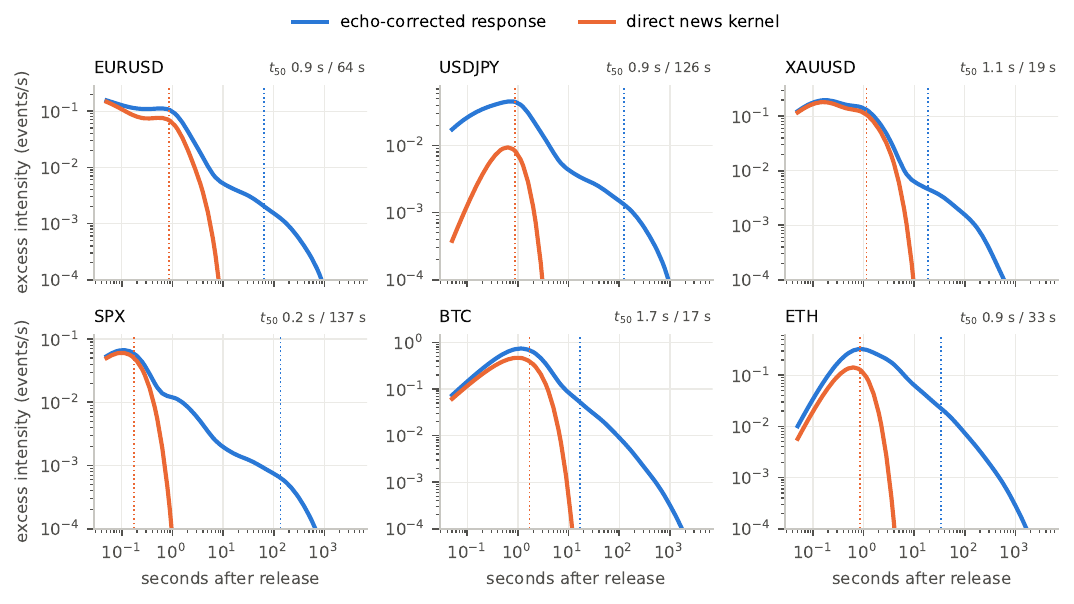}
\caption{Response of each asset to a CPI release, in excess events per second (both axes
logarithmic). Orange: direct news kernel. Blue: echo-corrected response $\bm m(t)$, which adds the
events triggered by earlier events. Dotted lines and panel titles give $t_{50}$, direct / echo-corrected.}\label{fig:absorption}
\end{figure}

\begin{table}[!htbp]\centering\small
\caption{Echo-corrected absorption times from the pooled model: $t_{50}$ in seconds / $t_{90}$
in minutes after the release. $^\ast$Release types whose news-kernel mass does not exceed their
placebo by 50\% (Figure~\ref{fig:placebo}).}\label{tab:absorption}
\setlength{\tabcolsep}{4pt}
\begin{tabular}{lcccccc}
\toprule
Release & EURUSD & USDJPY & Gold & S\&P 500 & BTC & ETH \\
\midrule
CPI & 64 / 9.1 & 126 / 11.0 & 19 / 4.8 & 137 / 11.0 & 17 / 6.1 & 33 / 7.8 \\
NFP & 94 / 10.1 & 109 / 10.7 & 16 / 5.2 & 80 / 8.8 & 27 / 7.5 & 24 / 7.3 \\
PPI & 120 / 10.8 & 94 / 10.3 & 40 / 6.4 & 79 / 8.6 & 23 / 7.2 & 39 / 8.7 \\
FOMC & 89 / 10.1 & 76 / 9.8 & 82 / 9.0 & 77 / 8.9 & 41 / 9.2 & 36 / 9.1 \\
Retail & 59 / 9.1 & 77 / 9.8 & 87 / 9.2 & 114 / 10.4 & 26 / 7.6 & 26 / 7.8 \\
\midrule
GDP$^\ast$ & 95 / 10.0 & 81 / 9.6 & 26 / 5.1 & 145 / 11.2 & 27 / 7.3 & 27 / 7.5 \\
FOMC press conf.$^\ast$ & 88 / 10.0 & 111 / 10.9 & 60 / 8.1 & 59 / 8.0 & 45 / 9.5 & 52 / 10.0 \\
Claims$^\ast$ & 124 / 11.0 & 56 / 9.1 & 87 / 9.3 & 81 / 9.1 & 36 / 9.0 & 40 / 9.4 \\
ECB$^\ast$ & 61 / 9.2 & 73 / 9.6 & 88 / 9.0 & 123 / 10.8 & 37 / 9.2 & 42 / 9.6 \\
ECB press conf.$^\ast$ & 137 / 11.3 & 90 / 10.2 & 38 / 6.5 & 63 / 8.2 & 25 / 7.3 & 32 / 8.0 \\
BoJ$^\ast$ & 93 / 10.2 & 80 / 9.9 & 45 / 7.3 & 73 / 8.8 & 35 / 8.5 & 37 / 8.8 \\
\bottomrule
\end{tabular}

\end{table}

\begin{figure}[!htbp]\centering
\includegraphics[width=\linewidth]{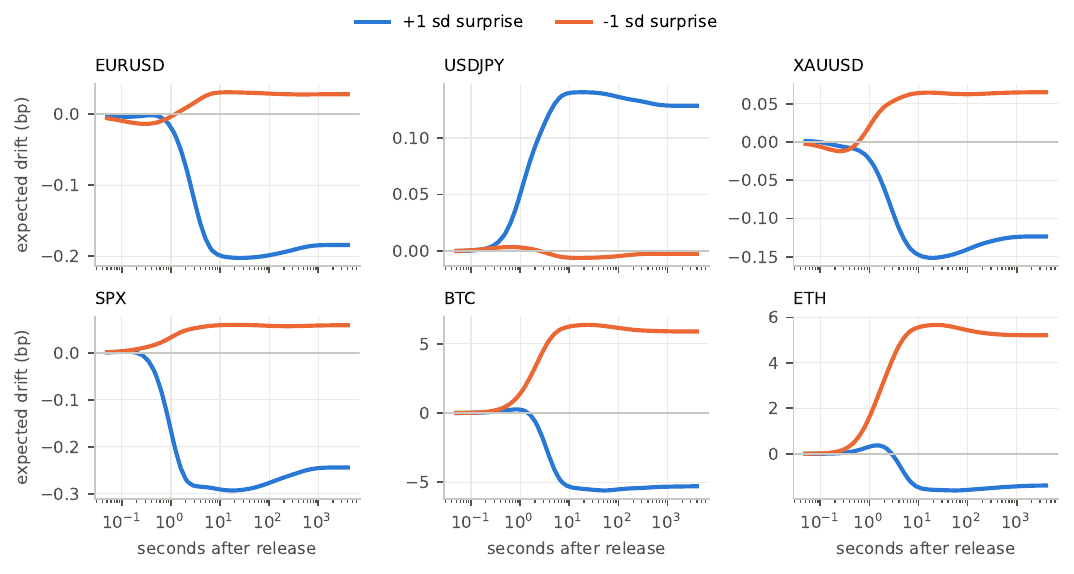}
\caption{Expected cumulative price change, in basis points, after a CPI surprise of $+1$
(blue) or $-1$ (orange) standard deviation, from the pooled model. Time axis logarithmic.}\label{fig:drift}
\end{figure}

\paragraph{Goodness of fit.} These conclusions rest on the fit of the model, which we check with
time-rescaled residuals \citep{brown2002time}. They have
Kolmogorov--Smirnov statistics of at most \ksMax{} against Exp(1); the per-window
excess-dispersion test rejects in \edLo--\edHi\% of windows, compared with 42\% (power-law
kernel) and 93\% (double exponential) of trading days reported for EBS FX data by
\citet{rambaldi2015modeling} with the same test.

\paragraph{Forecasting.} Out of sample, the closed-form MSX forecasts of post-release activity
improve on climatology at every horizon, and its drift forecasts are more informative than the
best baseline (Appendix~\ref{app:forecast}).

\section{Discussion and limitations}\label{sec:discussion}

Writing every kernel as a nonnegative combination of phase-type densities makes Hawkes estimation
a convex problem whose solution can be certified, and it turns the response to a shock into a
matrix exponential. Adding a neural history encoder and a mixture-of-Erlang renewal channel to the
same state space gives a model whose likelihood is still exact, which trains with a parallel scan
and which can start from the certified linear fit. This model is competitive with the strongest
neural temporal point processes, and better than them on the order books, Retweet and Taobao.

We draw three conclusions. First, the speed at which news is absorbed is not a property of the
news kernel alone. The market's own echoes lengthen it by one to two orders of magnitude, and the
correction can be computed in closed form. Second, falsification tests are worth building in. Two
of our earlier specifications produced plausible news effects that the placebo test showed to be
artefacts. Third, benchmark comparisons should check that every model is scored on the same
observed data and that the data carry the structure the benchmark is meant to test. A single clamp
inside one implementation was enough to reverse a ranking, and the Amazon gaps are drawn from two
fixed intervals.

\paragraph{Limitations.} For payrolls, claims, PPI and retail sales our expectations are
statistical forecasts rather than survey consensus, and the FX data have a gap in the first half
of 2023. The order-book evaluation uses the public LOBSTER samples, which cover one trading day per
stock. On the order books, the neural baselines were trained with smaller batches than their
authors used, to fit an 8\,GB GPU. On the public benchmarks we compare with published numbers and
have checked our evaluation with one re-run. On Retweet, EPT-X was trained for 25 epochs.

\subsubsection*{Broader Impact Statement}
This work models market event streams with public or licensable data and makes no use of
personal data. Better models of how quickly prices absorb public information can inform market
monitoring and the timing of disclosures, and exact likelihoods make benchmark comparisons more
reliable. Faster or more accurate event forecasts could also be used in trading; the methods
described are standard statistical tools and do not by themselves confer an informational
advantage. We release code and patches so that the reported numbers, including the corrected
baseline, can be reproduced.

\subsubsection*{Reproducibility Statement}
All result tables and every number quoted in the text are generated by scripts from the logged
outputs of the experiments; the \codename{} contains the models, the experiment and
table scripts, the EasyTPP patches (tie handling and the IntensityFree clamp), configuration
files and unit tests (exact compensators against numerical quadrature, causality of the
encoders, nesting of EPT-X in EPT). Public datasets are used through their standard splits;
the market data are licensed from their providers and are not redistributed. The code is
available at \url{https://github.com/itz-sayak/infodiff}.

\bibliography{refs}
\bibliographystyle{tmlr}

\appendix
\section{Proofs}\label{app:proofs}

\begin{proof}[Proof of Proposition~\ref{prop:gap}]
We use the notation of the proposition: $X$ has rows $\bm x_n^\top=\bm x(t_n)^\top$,
$\lambda_n=\bm x_n^\top\bm\theta$, and
$\ell(\bm\theta)=\sum_{n=1}^{N}\log\lambda_n-(\bm\Phi+\bm\gamma)^\top\bm\theta$.

\emph{Step 1 (an upper bound for every feasible point).} For $u,v>0$, Fenchel's inequality gives
\[
\log u\;\le\;v\,u-1-\log v .
\]
Take any vector $\bm v=(v_1,\dots,v_N)$ with positive entries such that
$X^\top\bm v\le\bm\Phi+\bm\gamma$ componentwise. Applying the inequality with $u=\lambda_n$ and
$v=v_n$ and summing over $n$,
\[
\sum_{n}\log\lambda_n\;\le\;\sum_n v_n\lambda_n-N-\sum_n\log v_n
\;=\;\bm\theta^\top X^\top\bm v-N-\sum_n\log v_n .
\]
Since $\bm\theta\ge0$ and $X^\top\bm v\le\bm\Phi+\bm\gamma$, we have
$\bm\theta^\top X^\top\bm v\le\bm\theta^\top(\bm\Phi+\bm\gamma)$. Subtracting
$(\bm\Phi+\bm\gamma)^\top\bm\theta$ from both sides therefore gives
\[
\ell(\bm\theta)\;\le\;D(\bm v):=-N-\sum_n\log v_n .
\]
This holds for every feasible $\bm\theta$, in particular for the maximiser, so
$\ell^\star\le D(\bm v)$.

\emph{Step 2 (a computable choice of $\bm v$).} Let $\bm\theta$ be the current fit and set
$v_n=1/(s\lambda_n)$, with $s=\max_p\bigl(\sum_n x_{np}/\lambda_n\bigr)/(\Phi_p+\gamma_p)$. Then
$(X^\top\bm v)_p=\frac{1}{s}\sum_n x_{np}/\lambda_n\le\Phi_p+\gamma_p$ by the definition of $s$, so
$\bm v$ is admissible in Step 1. For this choice,
\[
D(\bm v)=-N+N\log s+\sum_n\log\lambda_n ,
\qquad\text{hence}\qquad
\ell^\star-\ell(\bm\theta)\;\le\;D(\bm v)-\ell(\bm\theta)=N\log s+\bm\theta^\top(\bm\Phi+\bm\gamma)-N .
\]

\emph{Step 3 (the bound vanishes at the optimum).} The Karush--Kuhn--Tucker conditions of the
maximisation state that $\sum_n x_{np}/\lambda_n=\Phi_p+\gamma_p$ when $\theta_p>0$ and
$\sum_n x_{np}/\lambda_n\le\Phi_p+\gamma_p$ when $\theta_p=0$. Hence $s=1$, so $N\log s=0$. Moreover,
\[
\bm\theta^\top(\bm\Phi+\bm\gamma)=\sum_p\theta_p\sum_n\frac{x_{np}}{\lambda_n}
=\sum_n\frac{\bm x_n^\top\bm\theta}{\lambda_n}=N ,
\]
so the second term is also zero.
\end{proof}

\section{Specification history of the application}\label{app:history}
We report the specifications that failed the built-in falsification test. The final
specification was chosen because it passed this test, not because of the results it produced.

\paragraph{Version 1.} This version used news kernels with time scales up to 30 minutes and Erlang
order 3, a constant baseline in each window and time-of-day effects in UTC. It also allowed tied
events: a trade that crossed several $\delta$ levels produced several events with the same
timestamp. The single placebo type absorbed \vOnePlacebo{} extra events per pseudo-release, which
is comparable to real releases, and the residuals of the crypto series had Kolmogorov--Smirnov
statistics up to \vOneKSmax. Two causes explain this. News kernels longer than the post-release
window absorbed slow trends within the window, and tied events violate the assumption that events
occur one at a time.

\paragraph{Version 2.} This version kept one event per timestamp, with the number of crossed levels
as a mark, restricted news kernels to time scales between 0.1 and 300\,s, used a linear baseline
within each window and modelled seasonality in New York time. The placebo mass fell to
\vTwoPlacebo{} but was still significant. Placebo mass appeared at every release time of day, so
unmodelled 08:30 releases could not be the whole explanation.

\paragraph{Version 3 (final).} Placebo days now also avoid sixteen second-tier US releases. The
news kernels carry an $\ell_1$ penalty, weighted by exposure, whose level is chosen by the
likelihood of held-out windows (Table~\ref{tab:l1}). Without the penalty, nonnegative estimates
inflate release and placebo masses equally. The held-out likelihood peaks at
$\ell_1=\lOneSelected$, which is also where release and placebo masses separate.

\begin{table}[!htb]\centering\small
\caption{Choice of the $\ell_1$ penalty on the news kernels. Held-out log-likelihood on the
2024 windows relative to the best level (bold), and the mean news-kernel mass of release and
placebo windows.}\label{tab:l1}
\begin{tabular}{cccc}
\toprule
$\ell_1$ level & held-out LL (rel.) & mean release mass & mean placebo mass \\
\midrule
0.2 & {$-$93.7} & 10.79 & 6.23 \\
0.4 & {$-$27.1} & 5.20 & 2.89 \\
0.8 & \textbf{0.0} & 1.81 & 1.03 \\
1.6 & {$-$20.5} & 0.44 & 0.37 \\
\bottomrule
\end{tabular}

\end{table}

\section{Forecasting the post-release burst}\label{app:forecast}
On the \nTestReleases{} releases of 2025--2026, an MSX model fitted on 2022--2024 forecasts
the number of $\delta$-crossings in the first 1, 5 and 15 minutes in closed form
(Table~\ref{tab:track-d}). It improves on same-type climatology at every horizon; the
\bestActivityModel{} and HAR regressions, which are fitted directly to this target, score lower
errors on activity. For the direction of the move the structural model is more informative: the correlation between its
forecast and the realised one-minute drift is \driftCorrOursOne{}, against \driftCorrBaseOne{}
for the best baseline (mean drift by release type and surprise sign).

\begin{table}[!htbp]\centering\small
\caption{Post-release activity forecasts on the \nTestReleases{} test releases of
2025--2026, from models fitted on 2022--2024. $N$ is the number of $\delta$-crossings after the
release; QLIKE scores the forecast of realised variance. Mean over assets; lower is better. Bold:
best; underlined: second best; shaded: our model.}\label{tab:track-d}
\setlength{\tabcolsep}{5pt}
\begin{tabular}{lrrrrrr}
\toprule
& \multicolumn{3}{c}{RMSE of $\log(1+N)$} & \multicolumn{3}{c}{QLIKE of realised variance} \\
\cmidrule(lr){2-4}\cmidrule(lr){5-7}
Method & 1 min & 5 min & 15 min & 1 min & 5 min & 15 min \\
\midrule
Climatology & 1.440 & 1.308 & 1.199 & \textbf{1.263} & 0.916 & 0.701 \\
Event-study OLS & \textbf{1.140} & \textbf{0.925} & \textbf{0.776} & 2.552 & \textbf{0.580} & \textbf{0.372} \\
HAR & \underline{1.217} & \underline{0.985} & \underline{0.820} & \underline{1.467} & \underline{0.762} & \underline{0.468} \\
\rowcolor{oursbg} MSX closed form & 1.322 & 1.049 & 0.955 & 2.913 & 1.036 & 0.518 \\
\bottomrule
\end{tabular}

\end{table}

\section{Synthetic recovery: all metrics}\label{app:synthetic}
Table~\ref{tab:track-a-full} reports every metric for the five scenarios of Section~\ref{sec:synthetic}.

\begin{table}[!tp]\centering\footnotesize
\caption{Synthetic recovery, all metrics (mean $\pm$ s.d.\ over five seeds):
relative error of the branching matrix and of its spectral radius, kernel $L^1$ error and held-out
deficit dLL ($10^{-3}$ nats/event), all lower is better, and edge AUC (higher is better). AUC needs
absent true edges, so it is defined only for S4 and S5 (n/a for the complete graphs S1--S3).
``--'': not produced by the method; NPHC estimates only the branching matrix. Missing rows are fits
that did not converge. Bold: best; underlined: second best within each scenario; shaded: ours.}\label{tab:track-a-full}
\scriptsize\setlength{\tabcolsep}{5pt}\renewcommand{\arraystretch}{1.0}
\begin{tabular}{lrrrrr}
\toprule
& $G$ error & $\rho$ error & kernel $L^1$ & dLL ($10^{-3}$) & edge AUC \\
\midrule
\multicolumn{6}{l}{\textit{S1: exponential}} \\[1pt]
\quad Exp.\ kernel MLE & \textbf{0.018}{\scriptsize\,$\pm$0.007} & \textbf{0.005}{\scriptsize\,$\pm$0.003} & \textbf{0.018}{\scriptsize\,$\pm$0.008} & \textbf{0.05}{\scriptsize\,$\pm$0.05} & n/a \\
\quad Sum-of-exp.\ MLE & 0.065{\scriptsize\,$\pm$0.022} & 0.030{\scriptsize\,$\pm$0.013} & 0.165{\scriptsize\,$\pm$0.023} & 0.92{\scriptsize\,$\pm$0.13} & n/a \\
\quad ADM4 & \underline{0.018}{\scriptsize\,$\pm$0.007} & \underline{0.005}{\scriptsize\,$\pm$0.003} & \underline{0.018}{\scriptsize\,$\pm$0.008} & \underline{0.05}{\scriptsize\,$\pm$0.05} & n/a \\
\quad Hawkes EM (nonparam.) & 0.035{\scriptsize\,$\pm$0.015} & 0.017{\scriptsize\,$\pm$0.007} & 0.173{\scriptsize\,$\pm$0.010} & 1.02{\scriptsize\,$\pm$0.07} & n/a \\
\quad NPHC (cumulants) & 0.247{\scriptsize\,$\pm$0.087} & 0.032{\scriptsize\,$\pm$0.024} & -- & -- & n/a \\
\quad Conditional law & 0.120{\scriptsize\,$\pm$0.054} & 0.017{\scriptsize\,$\pm$0.016} & 0.193{\scriptsize\,$\pm$0.022} & 2.84{\scriptsize\,$\pm$1.03} & n/a \\
\rowcolor{oursbg} \quad MSX-auto & 0.054{\scriptsize\,$\pm$0.032} & 0.020{\scriptsize\,$\pm$0.018} & 0.093{\scriptsize\,$\pm$0.035} & 0.24{\scriptsize\,$\pm$0.12} & n/a \\
\rowcolor{oursbg} \quad MSX ($R=2$) & 0.056{\scriptsize\,$\pm$0.032} & 0.019{\scriptsize\,$\pm$0.018} & 0.092{\scriptsize\,$\pm$0.035} & 0.23{\scriptsize\,$\pm$0.12} & n/a \\
\rowcolor{oursbg} \quad MSX ($R=4$) & 0.056{\scriptsize\,$\pm$0.035} & 0.023{\scriptsize\,$\pm$0.018} & 0.105{\scriptsize\,$\pm$0.034} & 0.28{\scriptsize\,$\pm$0.15} & n/a \\
\rowcolor{oursbg} \quad MSX, exponential only & 0.055{\scriptsize\,$\pm$0.025} & 0.026{\scriptsize\,$\pm$0.015} & 0.157{\scriptsize\,$\pm$0.022} & 0.91{\scriptsize\,$\pm$0.13} & n/a \\
\addlinespace[3pt]\multicolumn{6}{l}{\textit{S2: power law}} \\[1pt]
\quad Exp.\ kernel MLE & 0.299{\scriptsize\,$\pm$0.011} & 0.219{\scriptsize\,$\pm$0.007} & 0.465{\scriptsize\,$\pm$0.003} & 26.05{\scriptsize\,$\pm$0.50} & n/a \\
\quad Sum-of-exp.\ MLE & \underline{0.057}{\scriptsize\,$\pm$0.012} & 0.016{\scriptsize\,$\pm$0.007} & \textbf{0.094}{\scriptsize\,$\pm$0.011} & \textbf{0.18}{\scriptsize\,$\pm$0.12} & n/a \\
\quad ADM4 & 0.299{\scriptsize\,$\pm$0.011} & 0.219{\scriptsize\,$\pm$0.007} & 0.465{\scriptsize\,$\pm$0.003} & 26.05{\scriptsize\,$\pm$0.50} & n/a \\
\quad Hawkes EM (nonparam.) & 0.057{\scriptsize\,$\pm$0.014} & 0.037{\scriptsize\,$\pm$0.008} & 0.188{\scriptsize\,$\pm$0.006} & 1.73{\scriptsize\,$\pm$0.21} & n/a \\
\quad NPHC (cumulants) & 0.153{\scriptsize\,$\pm$0.154} & 0.041{\scriptsize\,$\pm$0.047} & -- & -- & n/a \\
\quad Conditional law & 0.102{\scriptsize\,$\pm$0.067} & 0.029{\scriptsize\,$\pm$0.019} & 0.291{\scriptsize\,$\pm$0.028} & 4.91{\scriptsize\,$\pm$0.59} & n/a \\
\rowcolor{oursbg} \quad MSX-auto & 0.064{\scriptsize\,$\pm$0.029} & 0.037{\scriptsize\,$\pm$0.029} & 0.108{\scriptsize\,$\pm$0.022} & 0.33{\scriptsize\,$\pm$0.13} & n/a \\
\rowcolor{oursbg} \quad MSX ($R=2$) & 0.057{\scriptsize\,$\pm$0.021} & \underline{0.015}{\scriptsize\,$\pm$0.010} & 0.114{\scriptsize\,$\pm$0.015} & 0.32{\scriptsize\,$\pm$0.12} & n/a \\
\rowcolor{oursbg} \quad MSX ($R=4$) & 0.062{\scriptsize\,$\pm$0.014} & 0.016{\scriptsize\,$\pm$0.006} & 0.131{\scriptsize\,$\pm$0.017} & 0.45{\scriptsize\,$\pm$0.10} & n/a \\
\rowcolor{oursbg} \quad MSX, exponential only & \textbf{0.054}{\scriptsize\,$\pm$0.022} & \textbf{0.014}{\scriptsize\,$\pm$0.010} & \underline{0.097}{\scriptsize\,$\pm$0.011} & \underline{0.19}{\scriptsize\,$\pm$0.11} & n/a \\
\addlinespace[3pt]\multicolumn{6}{l}{\textit{S3: hump}} \\[1pt]
\quad Exp.\ kernel MLE & 0.166{\scriptsize\,$\pm$0.008} & 0.053{\scriptsize\,$\pm$0.004} & 0.801{\scriptsize\,$\pm$0.004} & 101.47{\scriptsize\,$\pm$1.81} & n/a \\
\quad Sum-of-exp.\ MLE & 0.038{\scriptsize\,$\pm$0.011} & 0.018{\scriptsize\,$\pm$0.005} & 0.419{\scriptsize\,$\pm$0.005} & 11.51{\scriptsize\,$\pm$0.46} & n/a \\
\quad ADM4 & 0.166{\scriptsize\,$\pm$0.008} & 0.053{\scriptsize\,$\pm$0.004} & 0.801{\scriptsize\,$\pm$0.004} & 101.47{\scriptsize\,$\pm$1.81} & n/a \\
\quad Hawkes EM (nonparam.) & \textbf{0.024}{\scriptsize\,$\pm$0.011} & \textbf{0.010}{\scriptsize\,$\pm$0.004} & 0.187{\scriptsize\,$\pm$0.008} & 2.24{\scriptsize\,$\pm$0.12} & n/a \\
\quad NPHC (cumulants) & 0.240{\scriptsize\,$\pm$0.096} & 0.045{\scriptsize\,$\pm$0.048} & -- & -- & n/a \\
\quad Conditional law & 0.149{\scriptsize\,$\pm$0.062} & 0.017{\scriptsize\,$\pm$0.010} & 0.319{\scriptsize\,$\pm$0.050} & 10.66{\scriptsize\,$\pm$2.13} & n/a \\
\rowcolor{oursbg} \quad MSX-auto & \underline{0.032}{\scriptsize\,$\pm$0.016} & 0.016{\scriptsize\,$\pm$0.007} & \underline{0.080}{\scriptsize\,$\pm$0.013} & \textbf{0.30}{\scriptsize\,$\pm$0.13} & n/a \\
\rowcolor{oursbg} \quad MSX ($R=2$) & 0.069{\scriptsize\,$\pm$0.009} & 0.037{\scriptsize\,$\pm$0.004} & 0.347{\scriptsize\,$\pm$0.007} & 5.90{\scriptsize\,$\pm$0.36} & n/a \\
\rowcolor{oursbg} \quad MSX ($R=4$) & 0.038{\scriptsize\,$\pm$0.010} & 0.017{\scriptsize\,$\pm$0.005} & \textbf{0.074}{\scriptsize\,$\pm$0.010} & \underline{0.31}{\scriptsize\,$\pm$0.15} & n/a \\
\rowcolor{oursbg} \quad MSX, exponential only & 0.034{\scriptsize\,$\pm$0.009} & \underline{0.015}{\scriptsize\,$\pm$0.004} & 0.416{\scriptsize\,$\pm$0.008} & 11.51{\scriptsize\,$\pm$0.46} & n/a \\
\addlinespace[3pt]\multicolumn{6}{l}{\textit{S4: multiscale network}} \\[1pt]
\quad Exp.\ kernel MLE & 0.173{\scriptsize\,$\pm$0.041} & 0.095{\scriptsize\,$\pm$0.025} & 1.635{\scriptsize\,$\pm$0.040} & 711.28{\scriptsize\,$\pm$53.09} & 1.000{\scriptsize\,$\pm$0.000} \\
\quad ADM4 & 0.399{\scriptsize\,$\pm$0.003} & 0.255{\scriptsize\,$\pm$0.001} & 0.410{\scriptsize\,$\pm$0.001} & 3.13{\scriptsize\,$\pm$0.25} & 1.000{\scriptsize\,$\pm$0.000} \\
\quad Hawkes EM (nonparam.) & 0.153{\scriptsize\,$\pm$0.012} & 0.045{\scriptsize\,$\pm$0.010} & 0.358{\scriptsize\,$\pm$0.012} & 4.55{\scriptsize\,$\pm$0.29} & 1.000{\scriptsize\,$\pm$0.000} \\
\quad NPHC (cumulants) & 1.030{\scriptsize\,$\pm$0.304} & 0.211{\scriptsize\,$\pm$0.186} & -- & -- & 0.588{\scriptsize\,$\pm$0.155} \\
\quad Conditional law & 0.464{\scriptsize\,$\pm$0.286} & 0.139{\scriptsize\,$\pm$0.050} & 1.059{\scriptsize\,$\pm$0.074} & 114.94{\scriptsize\,$\pm$1.20} & 0.831{\scriptsize\,$\pm$0.087} \\
\rowcolor{oursbg} \quad MSX-auto & \textbf{0.084}{\scriptsize\,$\pm$0.014} & \underline{0.016}{\scriptsize\,$\pm$0.006} & \textbf{0.128}{\scriptsize\,$\pm$0.026} & \textbf{0.32}{\scriptsize\,$\pm$0.08} & 1.000{\scriptsize\,$\pm$0.000} \\
\rowcolor{oursbg} \quad MSX ($R=2$) & 0.093{\scriptsize\,$\pm$0.015} & 0.021{\scriptsize\,$\pm$0.005} & 0.173{\scriptsize\,$\pm$0.024} & 0.54{\scriptsize\,$\pm$0.12} & 1.000{\scriptsize\,$\pm$0.000} \\
\rowcolor{oursbg} \quad MSX ($R=4$) & 0.101{\scriptsize\,$\pm$0.019} & 0.027{\scriptsize\,$\pm$0.007} & 0.207{\scriptsize\,$\pm$0.035} & 0.70{\scriptsize\,$\pm$0.11} & \underline{1.000}{\scriptsize\,$\pm$0.000} \\
\rowcolor{oursbg} \quad MSX, exponential only & \underline{0.084}{\scriptsize\,$\pm$0.014} & \textbf{0.016}{\scriptsize\,$\pm$0.006} & \underline{0.128}{\scriptsize\,$\pm$0.026} & \underline{0.32}{\scriptsize\,$\pm$0.08} & \textbf{1.000}{\scriptsize\,$\pm$0.000} \\
\addlinespace[3pt]\multicolumn{6}{l}{\textit{S5: 10-dim.\ power-law network}} \\[1pt]
\quad Exp.\ kernel MLE & 0.337{\scriptsize\,$\pm$0.153} & 0.103{\scriptsize\,$\pm$0.032} & 1.302{\scriptsize\,$\pm$0.897} & 426.80{\scriptsize\,$\pm$368.57} & 0.977{\scriptsize\,$\pm$0.050} \\
\quad ADM4 & 0.167{\scriptsize\,$\pm$0.003} & 0.096{\scriptsize\,$\pm$0.003} & 0.319{\scriptsize\,$\pm$0.002} & 26.51{\scriptsize\,$\pm$0.29} & 1.000{\scriptsize\,$\pm$0.000} \\
\quad Hawkes EM (nonparam.) & \underline{0.085}{\scriptsize\,$\pm$0.008} & 0.045{\scriptsize\,$\pm$0.008} & 0.280{\scriptsize\,$\pm$0.010} & 5.23{\scriptsize\,$\pm$0.25} & 1.000{\scriptsize\,$\pm$0.000} \\
\quad NPHC (cumulants) & 1.348{\scriptsize\,$\pm$0.477} & 0.520{\scriptsize\,$\pm$0.372} & -- & -- & 0.672{\scriptsize\,$\pm$0.042} \\
\quad Conditional law & 0.461{\scriptsize\,$\pm$0.047} & 0.037{\scriptsize\,$\pm$0.026} & 0.641{\scriptsize\,$\pm$0.025} & 78.64{\scriptsize\,$\pm$4.79} & 0.950{\scriptsize\,$\pm$0.033} \\
\rowcolor{oursbg} \quad MSX-auto & \textbf{0.045}{\scriptsize\,$\pm$0.004} & \textbf{0.015}{\scriptsize\,$\pm$0.005} & \textbf{0.079}{\scriptsize\,$\pm$0.005} & \textbf{0.74}{\scriptsize\,$\pm$0.10} & 1.000{\scriptsize\,$\pm$0.000} \\
\rowcolor{oursbg} \quad MSX ($R=2$) & 0.111{\scriptsize\,$\pm$0.013} & 0.040{\scriptsize\,$\pm$0.013} & 0.180{\scriptsize\,$\pm$0.017} & 1.19{\scriptsize\,$\pm$0.07} & 1.000{\scriptsize\,$\pm$0.000} \\
\rowcolor{oursbg} \quad MSX ($R=4$) & 0.122{\scriptsize\,$\pm$0.008} & 0.051{\scriptsize\,$\pm$0.013} & 0.218{\scriptsize\,$\pm$0.015} & 1.63{\scriptsize\,$\pm$0.09} & \underline{1.000}{\scriptsize\,$\pm$0.000} \\
\rowcolor{oursbg} \quad MSX, exponential only & 0.099{\scriptsize\,$\pm$0.014} & \underline{0.030}{\scriptsize\,$\pm$0.014} & \underline{0.145}{\scriptsize\,$\pm$0.017} & \underline{0.82}{\scriptsize\,$\pm$0.08} & \textbf{1.000}{\scriptsize\,$\pm$0.000} \\
\bottomrule
\end{tabular}

\end{table}

\section{Experimental details}\label{app:details}

\paragraph{MSX in the application.}
\begin{itemize}
\item Endogenous kernels: 10 rates from 5\,ms to 600\,s, Erlang order 2.
\item News kernels: 8 rates from 0.1 to 300\,s, Erlang order 3, marks $(1,z^+,z^-)$.
\item Anticipation kernels: 4 rates from 10 to 300\,s.
\item Baseline: a linear trend within each window plus 96 time-of-day hat functions.
\end{itemize}

\paragraph{MSX on the benchmarks.} MSX uses one baseline shared by all sequences and a grid of 12
rates from the 1st percentile of the training gaps to five times their 99.9th percentile.
MSX-auto chooses the Erlang order from $\{1,2,3,4\}$ and the $\ell_1$ level from
$\{0,0.01,0.05\}$ on the validation split.

\paragraph{EPT and EPT-X.}
\begin{itemize}
\item Encoder: a GRU with 64 units. Latent state: 4 channels of 8 learnable rates. The number of
Erlang phases was chosen on validation from $\{2,4\}$, together with the hidden size from
$\{64,128\}$, on Taxi, Taobao and StackOverflow.
\item Hawkes backbone: 10 rates with 2 phases each.
\item Optimisation: AdamW \citep{loshchilov2019decoupled} with learning rate $10^{-2}$, 1\% linear
warm-up and cosine decay, gradient clipping at 1, and batches of 64 that are halved automatically
if the GPU runs out of memory. Training stops when the validation likelihood has not improved for
40 epochs, within a budget of 300 epochs (25 epochs with batches of 32 on Retweet).
\item Renewal channel: 24 mean gaps spaced logarithmically from half the 0.1\% quantile of the
training gaps to five times their 99.9\% quantile, with orders $\{1,4,16\}$. The mark
distributions start at the empirical mark frequencies.
\item Selected on validation, the LOBSTER streams and Amazon add orders up to 1024 and a
history-dependent shift of the atoms. Amazon also uses a finer grid: the single-seed validation
log-likelihood was 0.604, 0.643, 0.653 and 0.649 with 24, 48, 96 and 192 scales, so we use 96.
\item Data-adaptive atoms: 48 quantile atoms (96 on Amazon and INTC), orders capped at 4096.
\item Residual mark law: one hidden layer of 64 GELU units on the history summary and the
standardised log-gap features; its output layer starts at zero.
\item Certified start: the MSX-auto dictionary and fit of the same dataset; read-outs, hazard
weights and renewal weights start at $e^{-7}$ and are divided by 10 until the initial validation
likelihood is within 0.01 nats per event of MSX. On Retweet the read-outs keep the initial scale
$e^{-7}$.
\item Selected configurations: residual mark law with dropout 0.1 on Taxi; residual mark law on
StackOverflow; MSX start on Taobao and Retweet; data-adaptive atoms, residual mark law and
certified start on Amazon; sharp atoms on the order books.
\end{itemize}

\paragraph{Baselines.} The neural baselines run in EasyTPP with the authors' configuration files.
The classical Hawkes estimators use the \texttt{tick} library \citep{bacry2018tick}, with
positive-constrained accelerated gradient descent, tolerance $10^{-10}$ and decay grids matched to
MSX. A short patch, included in the \codename, makes the IntensityFree clamp
configurable.

\paragraph{Compute.} All experiments ran on one laptop with an 8\,GB GPU and a 28-core CPU, with
each CPU job limited to four threads. Earlier EPT-X runs used the event loop, which runs about as
fast on the CPU as on the GPU; runs with the new components used the parallel scan, mostly on the
GPU. Both evaluations give the same likelihood.

\end{document}